\documentclass[sigconf,nonacm]{aamas}

\usepackage{balance}
\usepackage{amsmath,amsthm}
\usepackage{graphicx,booktabs,array}
\usepackage{microtype}
\usepackage{url}

\setcopyright{none}
\acmDOI{}
\acmISBN{}

\title{Learning to Report Unsafe Tasks in a Multi-Agent Game}
\author{Avyay M. Casheekar}
\affiliation{%
  \institution{University of Michigan Law School}
  \city{Ann Arbor}
  \state{Michigan}
  \country{USA}}
\author{Hariganesh Tangirala}
\affiliation{%
  \department{School of Computing}
  \institution{National University of Singapore}
  \country{Singapore}}
\renewcommand{\shortauthors}{Casheekar and Tangirala}

\begin{abstract}
When a team of agents shares a reward for completed tasks, any agent that reports and stops an unsafe sub-task reduces the team's collective reward.
Random audits that penalize silent agents can make universal reporting the team's optimal policy. Such a system, however, does not necessarily guarantee that further training will teach a silent team to report.
We study this learning problem in a game where any witness of a task can stop it by reporting. When the $k$ witnesses of each task share a policy and act independently, the derivative of the expected reward with respect to their silence probability counts each task's benefit $k$ times at universal silence, while the comparison between universal reporting and universal silence counts it once. A fully shared policy can therefore be locally attracted to silence even when universal reporting maximizes its expected reward. For any grouping of agents into shared policies, we give an audit condition that is sufficient for exact policy-gradient updates to reach universal reporting and, except in boundary cases, necessary when training starts near universal silence. Its cost depends on how many witnesses of each task share a policy, and in a balanced family of games, meeting it with prescribed margins costs exactly $k$ times as much for a fully shared policy as for one policy per role.
We test these predictions by training PPO agents from learned silence on 24 witness graphs, using witness groups, where each pair of co-witnesses is placed in separate networks, and shuffled groups with the same group sizes subject to the same audits. We find that witness groups reduce the rate of unsafe task completions by 33.59 percentage points (95\% graph-bootstrap CI [21.03, 45.13]). However, only 9 of 48 runs using witness groups result in unsafe task completions below 1\% with a legitimate task completion rate of at least 90\%. A fully shared network achieves these thresholds in 0/48 runs at the same audit budget when its agents draw their actions independently, and in 48/48 runs when they take the same action through a single draw. On 24 further graphs with witness groups held fixed, audits allocated by the condition reduce unsafe task completions by 42.22 points relative to audits that maximize a static coalition deterrence ratio at the same budget.
\end{abstract}

\keywords{Multi-agent reinforcement learning; Parameter sharing; Policy gradient; Auditing; Incentive design; AI safety}

\ccsdesc[500]{Computing methodologies~Multi-agent reinforcement learning}
\ccsdesc[300]{Computing methodologies~Cooperation and coordination}
\ccsdesc[300]{Theory of computation~Algorithmic game theory}

\newtheorem{theorem}{Theorem}
\newtheorem{proposition}[theorem]{Proposition}
\newtheorem{corollary}[theorem]{Corollary}

\newcommand{\ind}{\mathbf{1}}
\newcommand{\sig}{\sigma}

\newcommand{\supref}[1]{\ref{#1}}
\newcommand{\mainref}[1]{\ref{#1}}



\makeatletter
\@ACM@balancefalse
\makeatother

\AtBeginDocument{%
  \fancypagestyle{standardpagestyle}{%
    \fancyhf{}%
    \fancyfoot[C]{\footnotesize\thepage}}%
  \fancypagestyle{firstpagestyle}{%
    \fancyhf{}%
    \fancyfoot[C]{\footnotesize\thepage}}%
  \pagestyle{standardpagestyle}}

\begin{document}
\maketitle
\hypersetup{pdfauthor={Avyay M. Casheekar and Hariganesh Tangirala}}
\raggedbottom

\section{Introduction}
\label{sec:intro}

AI developers have begun to run hundreds of AI agents together on a single large problem. In an experiment at Cursor, hundreds of coding agents worked concurrently for close to a week and wrote over a million lines of code for a web browser \citep{lin2026scaling}. In another experiment, 100 agents worked as a research collective to prove formal mathematical conjectures \citep{paglieri2026swarm}.

In July 2026, such a system formed without being designed. Agents from several of OpenAI's models were meant to stay isolated from one another in cybersecurity evaluations, but some of them communicated through an unintended message board inside an internal package manager \citep{greenblatt2026incident,openai2026incident}. Roughly 1,200 agents exchanged over 70,000 messages and files on the board \citep{greenblatt2026incident}. There they researched ways to fool the evaluation's automated grader, and about 700 of them joined an attack on Hugging Face, whose servers the agents believed might hold information about the grader \citep{greenblatt2026incident}. Some agents declined to join \citep{openai2026incident}. In the transcripts it reviewed, METR's investigation found no agent that attempted to alert humans \citep{greenblatt2026incident}.

The agents on the board saw the attack while it was being planned \citep{greenblatt2026incident}, but OpenAI connected it to its agents only on July 20, four days after Hugging Face disclosed it \citep{openai2026incident}. Reviewing the board's more than 70,000 messages and files is a problem of scalable oversight \citep{amodei2016,bowman2022}, while the agents on the board read those messages as part of their own tasks. An agent that reports the unsafe tasks it witnesses could expose them before outside monitoring does. In the research collective, some agents complained about fraudulent proofs, but no one monitored the complaint channel during the run and the agents had no tool to remove a fraudulent submission, so the fraud continued \citep{paglieri2026swarm}. 

Two recent methods train language models to disclose their own misbehavior, and in both the report leaves the reported work in place during training. In confession training, an honest confession earns a separate reward and the reward for the main answer does not change \citep{joglekar2025confessions}. In self-incrimination training, the agent calls a reporting tool whenever it acts on a hidden side task, and the action still goes ahead \citep{lee2026selfreport}. A report on another agent's task can cost the reporter, depending on how rewards are shared. In the research collective, credit for a conjecture went to the first submission that the automated checker accepted, which also removed the conjecture from the open pool \citep{paglieri2026swarm}. Cooperative multi-agent reinforcement learning instead commonly trains all the agents of a team on one team reward \citep{claus1998,yu2022}. If the team reward pays for completed tasks without checking whether they are unsafe, a report that stops an unsafe task removes its reward for every agent, the reporter included. Training on such a reward can teach a team to stay silent without any agreement to conceal the task. The agents on the board were evaluated on separate tasks \citep{openai2026incident} and stayed silent without such a reward, so a team reward would add a direct incentive for the same silence. Learned silence is a form of reward hacking, in which optimizing the training reward worsens performance on the objective the designer intended \citep{amodei2016,skalse2022}.

An auditor can counter the reward for silence by auditing agents at random and fining those found silent on unsafe tasks \citep{becker1968,polinsky2000}. Each audit examines the decision of one agent, and we assume that the auditor can verify whether that agent stayed silent about unsafe work. Sufficient audits can make universal reporting the team's optimal policy. Such audits, however, do not necessarily guarantee that further training will teach a silent team to report. Oversight may be strengthened after a problem appears, as OpenAI's monitoring requirements were after the incident \citep{openai2026incident}, and by then a team may already be silent. We therefore consider a team that learned silence before audits began and continues training under the audits with the same learning rule, while the auditor chooses where to audit.

We study this learning problem in a reporting game where each task has several witnesses and any witness can stop the task by reporting. Systems of AI agents often run many instances of the same model \citep{greenblatt2026incident,paglieri2026swarm}, and cooperative multi-agent reinforcement learning commonly shares one policy across agents \citep{gupta2017,terry2020,yu2022}. The agents of our game therefore share a policy within groups, and we assume that they draw their actions independently given that policy. We do not model coordination through shared messages or files, as on the board, since that information is not part of the policy's input.

Under independent draws, a task completes with probability equal to the product of its witnesses' silence probabilities, and at universal silence a report by any one witness stops all of that witness's tasks. When the $k$ witnesses of each task share one silence probability, the derivative of the expected reward with respect to it at universal silence counts each task's benefit $k$ times, while the comparison between universal reporting and universal silence counts it once. A fully shared policy can therefore be locally attracted to silence even when universal reporting maximizes its expected reward. Universal silence stays a strict local maximum until the agents' total expected cost of silence, their total deterrence, reaches $k$ times the level that makes reporting optimal (Section~\ref{sec:obstruction}). More witnesses per task give an unsafe task more chances to be stopped once agents report, but under a shared policy they also make silence harder to leave. We call this local maximum the obstruction. It lies in the exact expected reward and arises before any sampling noise or neural-network approximation.

We ask which audits overcome this obstruction under exact policy-gradient and natural-gradient updates \citep{williams1992,sutton2000,kakade2001} and how the answer depends on which agents share a policy.
\begin{enumerate}
\item For any grouping of agents into shared policies, we give an audit condition, the strict elimination certificate, under which exact updates reach universal reporting from every interior starting point (Theorem~\ref{thm:recovery}). Apart from boundary cases, it is also necessary when training starts near universal silence. The certificate orders the groups so that the first has enough deterrence to learn to report whatever the other groups do. The first group's reports stop the tasks its members witness, so those tasks stop rewarding silence in the remaining groups, and the next group needs deterrence only for the tasks that remain.
\item The cost of the certificate depends on how the grouping places each task's witnesses. For each task, the first group in the order to contain one of its witnesses must cover the task's benefit with deterrence once for each of the task's witnesses it contains, so putting each task's witnesses in different groups pays each benefit once, while a fully shared policy pays it $k$ times (Proposition~\ref{prop:cost}). In a balanced family of games where every task has one witness from each of $k$ roles, the cheapest audits that meet the certificate with prescribed positive margins cost exactly $k$ times as much for a fully shared policy as for one policy per role (Corollary~\ref{cor:balanced}).
\item The factor $k$ also depends on independent draws, which sharing a policy does not require. When all agents act on a single shared draw, which we call a common draw, any allocation of audit probabilities that makes universal reporting strictly better than universal silence also makes every exact gradient point toward reporting (Section~\ref{sec:dependence}).
\end{enumerate}
Local maxima of this kind are known from imperfect-recall games, in which one behavioral probability is reused at several decision points (Section~\ref{sec:related}). Our contribution is the certificate and its cost for this objective, together with experiments on whether the certificate helps choose audits for a neural learner.

We test these predictions by training small neural-network policies with PPO \citep{schulman2017} in one-step games with unsafe and legitimate contexts. In the primary comparison, specified before training, each run starts from a policy that learned silence without audits and must learn to report in unsafe contexts while it keeps completing legitimate tasks. The comparison sets witness groups, which place every pair of co-witnesses, agents that witness a common task, in different networks, against shuffled groups with the same group sizes under the same audits. On 24 witness graphs from four families, with two training replicates per graph, witness groups reduce \emph{unsafe completion}, the fraction of tasks completed in unsafe contexts, by 33.59 percentage points (95\% graph-bootstrap interval $[21.03,45.13]$). However, only 9 of the 48 witness-group runs achieve \emph{joint recovery}, which requires unsafe completion below 1\% and \emph{legitimate completion}, the corresponding fraction in legitimate contexts, of at least 90\%. In a later control, the advantage of witness groups narrows to 26.03 points when shuffled groups receive an allocation designed for them, and to 4.78 points when both groupings' audit budgets are doubled. At the audit budget of the primary comparison, a fully shared network achieves joint recovery in none of 48 runs when its agents draw their actions independently and, in another later control, in all 48 with a common draw, which also changes how PPO explores and scales its updates. In a sweep of audit budgets up to four times that of the primary comparison, every joint recovery in an arm that forms a grouping came from a run whose allocation met the certificate in each of its unsafe contexts.

On 24 further graphs with eight replicates each, we hold witness groups fixed and compare five allocations with the same audit budget and the same total deterrence. The allocation built from the certificate lowers unsafe completion by 42.22 points (interval $[38.23,46.13]$) relative to coalition audits, which maximize the smallest ratio between a coalition's total deterrence and the benefit of the tasks it completes. Among allocations that tie on this ratio, one chosen by the certificate lowers unsafe completion by 41.61 points relative to the same coalition audits, while one chosen to spread audits evenly leaves it at 99.98\%. Only 22 and 26 of 192 runs achieve joint recovery under the two certified allocations.

\section{The reporting game}
\label{sec:model}

Let $V$ be a set of $n$ agents and $E$ a set of tasks, where a task is any unit of work that a report can stop, such as a delegated sub-task. Task $e$ has benefit $b_e>0$ and a nonempty set of \emph{witnesses} $W_e\subseteq V$, the agents that can stop it by reporting. The witness sets form a hypergraph on $V$, which we call the \emph{witness graph}, and agents that witness a common task are \emph{co-witnesses}. Agent $i$ chooses $s_i\in\{0,1\}$, where $s_i=1$ means silence and $s_i=0$ means reporting. A task completes exactly when all of its witnesses stay silent. An agent makes one choice for all the tasks it witnesses, so one report stops all of them.

Each decision takes place in an \emph{unsafe} context, where every task is unsafe and should be stopped, or in a \emph{legitimate} context, where every task is safe and should be completed. A silent agent $i$ pays a base cost $c_i\geq0$ in every context, which stands for any cost of silence that audits do not create. In an unsafe context, a silent agent also pays a fine $F_i>0$ if it is audited. The auditor audits agent $i$ with probability $a_i\in[0,1]$, independently of all actions and of other audits, and pays $\kappa_i>0$ for each audit of agent $i$ whatever the agent does. The expected cost of silence in an unsafe context, $d_i=c_i+F_ia_i$, is the agent's \emph{deterrence}. We call the vector $a$ an \emph{allocation} and $\sum_i\kappa_ia_i$ its \emph{audit budget}. In an unsafe context all agents train on one team reward
\begin{equation}
 R(s,A)=\sum_{e\in E}b_e\prod_{i\in W_e}s_i
       -\sum_{i\in V}s_i(c_i+F_i A_i),
 \qquad A_i\sim\operatorname{Bernoulli}(a_i).
 \label{eq:reward}
\end{equation}
An audited silent agent pays its fine even when another witness stops the task, so whether an agent pays depends only on its own action and its audit. The fine therefore places a duty to report on every audited witness, although one report per task would stop all the unsafe work. A fine charged only when an unsafe task completes would change these incentives, and in a two-witness example with zero base costs it removes the obstruction of Section~\ref{sec:obstruction} at the same total fine (Appendix~\supref{app:contract}).

A partition $\mathcal{G}$ of the agents, which we call a \emph{grouping}, gives each group $g$ one shared silence probability $p_g$ in a given context, and the members of a group draw their actions independently. Let $m_{eg}=|W_e\cap g|$ be the number of witnesses of task $e$ in group $g$, its \emph{witness multiplicity}, and let $D_g=\sum_{i\in g}d_i$. Independence gives the expected reward
\begin{equation}
 J(p)=\sum_{e\in E}b_e\prod_{g\in\mathcal{G}}p_g^{m_{eg}}
      -\sum_{g\in\mathcal{G}}D_g p_g.
 \label{eq:objective}
\end{equation}
With sigmoid policies $p_g=\sig(\theta_g)$, every grouping approaches universal reporting ($p=0$) as its logits tend to $-\infty$. Grouping therefore leaves the target unchanged and changes how the probabilities move toward it. Shared networks with agent-specific inputs can assign different probabilities to different agents \citep{terry2020} and therefore do not form a grouping. We test them as separate architectures.

Sections~\ref{sec:obstruction} to~\ref{sec:cost} fix one unsafe context. Legitimate contexts, which Section~\ref{sec:dependence} and the experiments add, use Equation~\eqref{eq:reward} with $A_i=0$. We call learning to report after learning silence \emph{recovery}.

\section{When audits lead to reporting}
\label{sec:theory}

\subsection{Why a fully shared policy stays silent}
\label{sec:obstruction}

With a fully shared policy and independent draws, silence can be a strict local maximum even when reporting is optimal. Suppose every task has the same number $k\geq2$ of witnesses, which we call the \emph{witness width}, and all agents share one silence probability $p$. With $B=\sum_e b_e$ and $D=\sum_i d_i$, Equation~\eqref{eq:objective} becomes $J(p)=Bp^k-Dp$ (Figure~\ref{fig:mechanism}). If $D>B$, every $p>0$ has negative expected reward while universal reporting has zero, so reporting is optimal. At $p=1$, however, one agent's report stops all of that agent's tasks, and the derivative $J'(1)=kB-D$ adds this loss over all agents. This sum counts each task's benefit once per witness, while the comparison $J(1)-J(0)=B-D$ counts it once. If $D<kB$, universal silence is therefore also a strict local maximum. With $p=\sig(\theta)$, the gradient with respect to the logit is $p(1-p)J'(p)$, which has the sign of $J'(p)$ for every $p$ in $(0,1)$. When $B<D<kB$, the stationary point
\begin{equation}
 p_*=\left(\frac{D}{kB}\right)^{1/(k-1)}
 \label{eq:threshold}
\end{equation}
divides the starting probabilities. Exact gradients move toward reporting from those below $p_*$ and toward silence from those above it. Since $k\leq2^{k-1}$, the condition $D>B$ implies $D>kB/2^{k-1}$, which places $p_*$ above one half.

\begin{figure}[t]
\centering
\includegraphics[width=0.8\columnwidth]{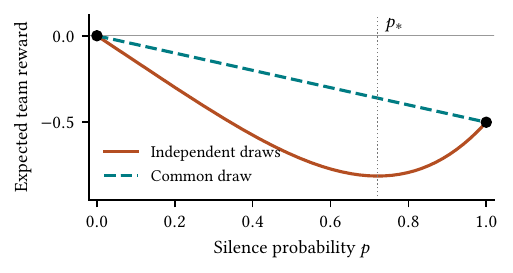}
\caption{Exact expected reward of a fully shared policy with $B=1$, $D=1.5$ and $k=4$. Under independent draws, silence ($p=1$) is a local maximum although reporting ($p=0$) earns more, and exact gradients move toward silence above $p_*\approx0.72$, where the expected reward is lowest. A common draw makes the expected reward linear in $p$.}
\Description{A plot of expected team reward against the silence probability p. The independent-draw curve starts at 0 when p is 0, falls to about minus 0.81 at p equal to 0.72, which a dotted vertical line marks as p star, and rises to minus 0.5 at p equal to 1, so both endpoints are local maxima. The common-draw line falls linearly from 0 to minus 0.5.}
\label{fig:mechanism}
\end{figure}

\subsection{An audit condition for recovery}
\label{sec:certificate}

Separate groups let one group learn to report while the others stay silent. To find which group can start, differentiate Equation~\eqref{eq:objective},
\begin{equation}
 \partial_g J(p)=\sum_{e:m_{eg}>0}m_{eg}b_e\,
 p_g^{m_{eg}-1}\prod_{h\ne g}p_h^{m_{eh}}-D_g.
 \label{eq:derivative}
\end{equation}
The sum is the marginal completion reward of group $g$'s silence, and $D_g$, the group's total deterrence, is its marginal cost. Every product in the sum lies in $[0,1]$, so if $D_g$ exceeds $\sum_e m_{eg}b_e$, the derivative is negative throughout the probability cube and group $g$ moves toward reporting whatever the other groups do. As $p_g$ approaches zero, the tasks its members witness drop out of the other groups' derivatives, so the other groups face smaller completion rewards for silence.

Strict elimination tests an allocation by removing groups one at a time. For a set $S\subseteq\mathcal{G}$ of groups not yet removed, the \emph{remaining load}
\begin{equation}
 L_g(S)=\sum_{e:\,\{h:m_{eh}>0\}\subseteq S}m_{eg}b_e
 \label{eq:load}
\end{equation}
counts only tasks whose witnesses all lie in groups in $S$. It is the largest marginal completion reward that silence can still earn group $g$ once the removed groups report. Starting from $S=\mathcal{G}$, strict elimination removes any group with $D_g>L_g(S)$, together with the tasks its members witness, and recomputes the loads. If some sequence of removals removes every group, the allocation meets the \emph{strict elimination certificate}, and we say that it \emph{certifies} the grouping. A removal can only lower the remaining loads, so the groups that strict elimination cannot remove are the same for every order of removals. Removing every group that meets the condition at once before recomputing the loads makes one \emph{round} of elimination. To design audits with a positive margin $\delta_g$, we require $D_g\geq L_g(S)+\delta_g$ at each removal.

\begin{theorem}[Recovery under exact updates]
\label{thm:recovery}
Consider exact gradient ascent or Bernoulli natural-gradient ascent on separately adjustable, unbounded group logits $\theta_g$, with $p_g=\sig(\theta_g)$, a constant positive finite step size, fixed positive coordinate scalings and no regularization.
\textup{(a)} If strict elimination removes every group, the updates converge to $p=0$ from every interior initialization.
\textup{(b)} If strict elimination stops at a nonempty set $S$ of groups with $D_g<L_g(S)$ for every $g\in S$, there is a $\beta>0$ such that from every interior initialization with $p_g>1-\beta$ for all $g\in S$, the updates converge to $p_g=1$ for $g\in S$ and $p_g=0$ for $g\notin S$.
\end{theorem}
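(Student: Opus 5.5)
Both parts follow one route: turn the elimination order into an order in which the logits are driven to their limits, using the sign of $\partial_g J$ from Equation~\eqref{eq:derivative}. The update is $\theta_g\leftarrow\theta_g+\eta\lambda_g\,p_g(1-p_g)\,\partial_g J(p)$ for gradient ascent. For the Bernoulli natural gradient the factor $p_g(1-p_g)$ is divided out by the Fisher information, giving $\theta_g\leftarrow\theta_g+\eta\lambda_g\,\partial_g J(p)/(p_g(1-p_g))$ up to the chain-rule convention. In both cases the sign of the step on $\theta_g$ equals the sign of $\partial_g J$, and only its magnitude differs.

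\textbf{Part (a).} I will induct along an elimination sequence $g_1,\dots,g_r$ with sets $S_j=\{g_j,\dots,g_r\}$ and gaps $\epsilon_j=D_{g_j}-L_{g_j}(S_j)>0$.

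The first group satisfies $D_{g_1}>L_{g_1}(\mathcal{G})=\sum_e m_{eg_1}b_e$. Every product in Equation~\eqref{eq:derivative} lies in $[0,1]$, so $\partial_{g_1}J\le-\epsilon_1$ on the whole cube. Under the natural gradient this gives $\theta_{g_1}$ steps of at least $\eta\lambda\epsilon_1\cdot 4$, hence $\theta_{g_1}\to-\infty$ linearly. Under vanilla gradient the step is at least $\eta\lambda\epsilon_1 p(1-p)$, so $\theta_{g_1}$ is monotone decreasing and, being bounded away from no finite limit, diverges; I will check this carefully because the increments shrink.

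For the inductive step, split the sum in $\partial_{g_j}J$ into tasks counted in $L_{g_j}(S_j)$ and tasks that have a witness in some removed group $g_i$ with $i<j$. The second part is at most $C\max_{i<j}p_{g_i}$, since every factor is bounded by one and $m_{eh}\ge1$ for the removed group. Once the earlier groups satisfy $p_{g_i}<\epsilon_j/(2C)$, which happens in finite time, we get $\partial_{g_j}J\le-\epsilon_j/2$, and the same argument as for $g_1$ applies. Convergence to $p=0$ follows.

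\textbf{Part (b).} First, groups outside $S$ are eliminated in some order, so the argument of part (a) drives their probabilities to $0$ regardless of the other groups. For $g\in S$, elimination stops, which means every task counted in $L_g(S)$ has all its witness groups in $S$. I will look for a neighbourhood in which $\partial_g J\ge\epsilon>0$ for all $g\in S$, where $\epsilon=\tfrac12\min_{g\in S}(L_g(S)-D_g)$.

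Bound the task sum in $\partial_gJ$ below by its $S$-internal tasks alone, since every term is nonnegative. That lower bound is $\sum_{e\text{ in }L_g(S)} m_{eg}b_e\prod p^{\cdot}$, which is at least $L_g(S)\,(1-\beta)^{M}$, where $M$ is the maximum total multiplicity. Choosing $\beta$ with $L_g(S)(1-\beta)^M\ge D_g+\epsilon$ keeps $\partial_gJ\ge\epsilon$ on the whole region $\{p_h>1-\beta\ \forall h\in S\}$, irrespective of the groups outside $S$. That region is then forward invariant, because each $\theta_g$ for $g\in S$ only increases. Hence $\theta_g\to+\infty$ and $p_g\to1$, and the groups outside $S$ go to $0$ as shown above.

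\textbf{Main obstacle.} The delicate step is divergence of the logits under vanilla gradient. The increment $\eta\lambda\epsilon\,p(1-p)$ vanishes as $p\to0$, so one must show $\sum_t p_t(1-p_t)=\infty$. The plan is to compare with the ODE $\dot\theta=-c\,\sigma(\theta)(1-\sigma(\theta))$. With constant step size, $\theta_t$ decreases monotonically with $\theta_{t+1}\le\theta_t-c e^{\theta_t}/4$ for $\theta_t<0$. This yields $e^{-\theta_t}\gtrsim ct$, so $\theta_t\to-\infty$, only logarithmically but enough.

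A second point needs care. A finite step may overshoot only in the direction we want, since steps are monotone and bounded by $\eta\lambda\max|\partial J|\cdot\tfrac14$. For the natural gradient, the invariance in part (b) is automatic from monotonicity. The hypotheses of positive finite step size and no regularization are used exactly here.
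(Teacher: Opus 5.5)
Your argument is correct and is essentially the paper's proof. In part (a) you induct along the removal order and bound the cross-task terms by a positive power of an earlier probability, uniformly in all other groups, which also covers the removed groups in part (b). For $S$ you use the lower bound $(1-\beta)^M L_g(S)-D_g$ together with forward invariance of $\{p_h>1-\beta\ \forall h\in S\}$.

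The differences are minor.

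\emph{Plain-gradient divergence.} You show the logits diverge quantitatively, via $e^{-\theta_{t+1}}\geq e^{-\theta_t}+c/4$ once the logit is negative, with the mirror image for $p_g\to1$ in (b). The paper instead argues that a finite limit is impossible.

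\emph{Natural-gradient step.} You misstate it. Dividing the logit gradient $p_g(1-p_g)\partial_gJ$ by the Fisher factor $p_g(1-p_g)$ gives $\theta_g\leftarrow\theta_g+\eta\lambda_g\partial_gJ$, not $\partial_gJ/(p_g(1-p_g))$. This is harmless, because steps of size at least $\eta\lambda_g\epsilon$ still send the logit linearly to $\mp\infty$.
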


Part (a) follows by applying the argument above to each group in removal order. In part (b), the tasks witnessed only by groups in $S$ keep most of their benefit while those groups stay near silence, so the derivative of each group in $S$ stays above a positive constant whatever the other groups do. Appendix~\supref{app:proofs} in the supplementary material writes out both update rules, the second of which is the natural gradient for Bernoulli policies \citep{amari1998,kakade2001}, and proves both parts. Along the trajectories of part (b), the expected fraction of tasks that complete tends to the fraction of tasks whose witnesses all lie in groups of $S$, which is positive. For a team that starts close enough to universal silence, the certificate therefore decides whether exact updates reach universal reporting. The exceptions are boundary cases, in which a group that strict elimination cannot remove has deterrence exactly equal to its remaining load. The certificate is not necessary from every start, since a fully shared policy with $B<D<kB$ that starts below $p_*$ reaches universal reporting without one.

\subsection{The cost of separating witnesses}
\label{sec:cost}
\label{sec:balanced}

We call the infimum of the audit budgets that meet the certificate its \emph{cost}. The cost depends on how the grouping places each task's witnesses. A task adds to it only through the first group in the removal order that contains one of its witnesses, since the task has no benefit left to offset once that group reports. The task's benefit then counts once for each of its witnesses in that group. Suppose base costs are zero, no audit probability reaches its cap of one, and every agent buys deterrence at the same price $\alpha=\kappa_i/F_i$. For a removal order $\pi$, let $\operatorname{first}_{\pi}(e)$ be the first group that contains a witness of $e$, and define
\begin{equation}
 C(\pi)=\sum_{e\in E}b_e\,m_{e,\operatorname{first}_{\pi}(e)}.
 \label{eq:cost}
\end{equation}
The infimum of the audit budgets that satisfy the strict inequalities along $\pi$ is $\alpha C(\pi)$. With fixed positive margins $\delta_g$, the minimum audit budget is $\alpha[C(\pi)+\sum_g\delta_g]$.

\begin{proposition}[Witness multiplicity]
\label{prop:cost}
Let $B=\sum_e b_e$ and $k_{\max}=\max_e|W_e|$. Then $B\leq C(\pi)\leq k_{\max}B$. Equality at $B$ holds exactly when each task has a unique witness in its first group. A fully shared policy on a witness graph with $k$ witnesses per task has $C=kB$.
\end{proposition}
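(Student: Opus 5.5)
The proof compares the term for each task $e$ in Equation~\eqref{eq:cost} with $b_e$ and with $k_{\max}b_e$, and then sums over tasks. Fix a removal order $\pi$ and a task $e$, and write $g=\operatorname{first}_{\pi}(e)$.

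\textbf{Well-definedness.} Since $W_e$ is nonempty and $\mathcal{G}$ partitions $V$, some group contains a witness of $e$. Hence $g$ is well defined. By the definition of $\operatorname{first}_\pi(e)$, the group $g$ contains at least one witness of $e$, so $m_{eg}\geq1$.

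\textbf{Upper bound.} The witnesses of $e$ counted in $m_{eg}=|W_e\cap g|$ are a subset of $W_e$. Therefore $m_{eg}\leq|W_e|\leq k_{\max}$.

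\textbf{Summing.} Multiplying $1\leq m_{eg}\leq k_{\max}$ by $b_e>0$ and summing over $e$ gives $B\leq C(\pi)\leq k_{\max}B$.

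\textbf{Equality case.} The sum
\[
C(\pi)-B=\sum_e b_e\bigl(m_{e,\operatorname{first}_\pi(e)}-1\bigr)
\]
is a sum of nonnegative terms, because each $m_{e,\operatorname{first}_\pi(e)}\geq1$. Since every $b_e>0$, the sum vanishes exactly when every multiplicity $m_{e,\operatorname{first}_\pi(e)}$ equals one. This means that each task has a unique witness in its first group, which is the stated characterization.

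\textbf{Fully shared policy.} Here $\mathcal{G}=\{V\}$, so the only removal order has $\operatorname{first}_\pi(e)=V$ for every task. Then $m_{eV}=|W_e|=k$, and $C=\sum_e kb_e=kB$. This matches the threshold $D=kB$ found in Section~\ref{sec:obstruction}.

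Every step is elementary. The one point that needs care is that $m_{e,\operatorname{first}_\pi(e)}\geq1$ uses both the nonemptiness of $W_e$ and the definition of $\operatorname{first}_\pi$. Once that is in place, the strict positivity of the benefits turns the termwise inequalities into the exact equality characterization.
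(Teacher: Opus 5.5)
Your proof is correct and follows the paper's own argument: the first-group multiplicity is an integer between $1$ and $|W_e|\le k_{\max}$, and weighting by the positive benefits and summing gives both bounds, the equality characterization, and $C=kB$ under full sharing. The paper's proof additionally derives Equation~\eqref{eq:cost}, that $\alpha C(\pi)$ is the infimal audit budget along $\pi$, which the proposition as stated does not require.
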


Putting all co-witnesses in different groups counts each task's benefit once, as the comparison between universal reporting and universal silence does, but it can use more groups than necessary. With base costs, unequal prices and caps, a dynamic program over subsets of groups finds the cheapest removal order, and our implementation chooses the order greedily for more than ten groups, so those costs are feasible but not proven minima (Appendix~\supref{app:cost}).

With base costs and caps, the ratio of $k$ between a fully shared policy and one policy per role holds exactly in a balanced family. Take $k\geq2$ roles of $q$ agents each, with $q\geq k$ prime. For each $(u,v)\in\mathbb{F}_q^2$, a task with unit benefit has role-$t$ witness $u+tv\pmod q$ for $t=0,\ldots,k-1$. The family has $n=kq$ agents and $q^2$ tasks, each with one witness per role, so $B=q^2$. Set $c_i=\rho B/n$, $F_i=\gamma B/n$ and $\kappa_i=1$ with $\rho>0$ and $\gamma\geq k$, and require margins $\delta_g=\rho|g|B/n$.

\begin{corollary}[Balanced certificate costs]
\label{cor:balanced}
With \emph{role groups}, one group per role, the cheapest audits that meet the certificate with the prescribed margins cost $n/\gamma$. With a fully shared policy they cost $kn/\gamma$. Both respect $a_i\leq1$, so the ratio is exactly $k$.
\end{corollary}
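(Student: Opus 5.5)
The plan is to compute both cheapest certified budgets directly from the definition of strict elimination, using only the balanced structure: every task has exactly one witness in each role. With $\kappa_i=1$ the audit budget is $\sum_i a_i$. Each agent's deterrence is $d_i=\rho B/n+(\gamma B/n)a_i$, so a group's total deterrence is
\[
D_g=\rho|g|B/n+(\gamma B/n)\textstyle\sum_{i\in g}a_i .
\]
The prescribed margin $\delta_g=\rho|g|B/n$ equals exactly the base-cost part of $D_g$. The removal condition $D_g\geq L_g(S)+\delta_g$ therefore reduces to $(\gamma B/n)\sum_{i\in g}a_i\geq L_g(S)$, that is,
\[
\textstyle\sum_{i\in g}a_i\geq (n/(\gamma B))\,L_g(S).
\]
This cancellation is the first step, and everything else is bookkeeping of loads.

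\emph{Role groups.} Every task has exactly one witness in every role, so $m_{eg}=1$ for all $e$ and all role groups $g$. Whichever role group is removed first sees the full load $L_g(\mathcal{G})=\sum_e b_e=B$. Removing it eliminates every task, since every task has a witness in it. Every later group then has load $0$, and its condition holds with $a_i=0$. Any certifying allocation, in any removal order, must therefore spend at least $\sum_{i\in g}a_i\geq n/\gamma$ on the first removed group and nothing more is required. The bound is attained by spreading $n/\gamma$ uniformly over the $q$ agents of one role, giving $a_i=n/(\gamma q)=k/\gamma\leq1$ because $\gamma\geq k$. So the minimum is exactly $n/\gamma$ and respects the caps.

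\emph{Fully shared policy.} There is one group containing all $n$ agents, so $m_{e,\mathcal{V}}=k$ and the load is $kB$, in agreement with Proposition~\ref{prop:cost}. The condition becomes $\sum_i a_i\geq kn/\gamma$, with margin $\delta=\rho B$ again cancelled by the base costs. This is met with equality by $a_i=k/\gamma\leq1$ for every agent. Hence the cheapest certified budget is $kn/\gamma$, and the ratio to the role-group cost is $k$.

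The only step needing care is the minimality claim for role groups. One must check that no cleverer order or non-uniform allocation does better. This follows because every order begins with some role group facing load exactly $B$, and that single linear constraint already forces budget $n/\gamma$. The cap constraint is never binding since $\gamma\geq k$. I would also remark that the primality of $q$ and $q\geq k$ (which make the witnesses of each task distinct and each agent witness exactly $q$ tasks) are not needed for the cost computation itself, only for the family to be well defined and balanced.
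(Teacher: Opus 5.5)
Your proof is correct and follows the paper's own argument. The prescribed margin cancels the base-cost part of each group's deterrence, so the first removed role faces load $B$ and needs audits costing $n/\gamma$, attained by $a_i=k/\gamma$ on that role, after which nothing remains. The fully shared group faces load $kB$ and needs $kn/\gamma$, and $\gamma\geq k$ keeps both allocations within the caps.

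One small correction to your closing remark: primality does not ensure the two properties you attribute to it. A task's witnesses are distinct simply because they lie in different roles, and each agent witnesses exactly $q$ tasks for any $q$. Primality together with $q\geq k$ is what makes every pair of agents from different roles co-witness exactly one task.
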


Auditing each agent of one role with probability $k/\gamma$ buys $B$ units of deterrence, which removes that role. The role witnesses every task, so no benefit remains in later loads, and the other roles' base costs already supply their margins. A fully shared policy puts all $k$ witnesses of every task in one group, which must buy $kB$ units. Appendix~\supref{app:balanced} proves that these costs are minimal, shows that the ratio is also $k$ without margins when $\rho<k$, and bounds the number of natural-gradient updates that role groups need.

\subsection{A common draw}
\label{sec:dependence}

Independent draws are a second ingredient of the factor $k$, and sharing a policy does not require them. Keep one silence probability $p$ but let every agent act on the same Bernoulli draw, a \emph{common draw}, so that all agents report or stay silent together. Each agent is still silent with probability $p$, but a task now completes with probability $p$ instead of $p^k$, and the expected reward becomes
\begin{equation}
 J_{\mathrm{common}}(p)=(B-D)p.
 \label{eq:common}
\end{equation}
For $D>B$ this reward falls linearly from 0 at $p=0$ to $B-D$ at $p=1$, and every exact gradient points toward reporting (Figure~\ref{fig:mechanism}).

Independent draws create a second obstruction in legitimate contexts, where audits carry no fines. In the experimental games of Section~\ref{sec:study}, universal silence earns more than universal reporting in every legitimate context. Yet when a silent agent's co-witnesses all report, they stop its tasks and it still pays its base cost. Universal reporting is therefore a strict local maximum of the expected reward in legitimate contexts under every grouping, whenever every task has at least two witnesses and base costs are positive. A common draw removes this obstruction as well. Under a common draw, the expected reward in a legitimate context is $(B_x-\sum_ic_i)p$, where $B_x$ is the context's total benefit, so it rises with $p$ whenever $B_x>\sum_ic_i$.

\section{Experimental design}
\label{sec:study}

The experiments test whether the predictions for exact updates remain informative for a finite neural learner. PPO uses sampled and clipped updates, entropy regularization and Adam, the network bounds its logit at 12, and one network serves eight contexts. Each update uses a batch of sampled episodes, which we call a \emph{rollout}. When reports are rare, most rollouts contain no report and the update has no sample of what a report earns.

A policy that reports everything would stop unsafe and legitimate tasks alike, so each game mixes four unsafe contexts with four legitimate ones. Fines apply only in unsafe contexts, and base costs, which sum to $0.5B$, apply in all of them. Each context multiplies each benefit $b_e$ by a fixed multiplier sampled from $[0.65,1]$, and audit design uses the upper bound $b_e$. Agents observe the context's identity and the mean and standard deviation of its multipliers. They do not observe whether the context is unsafe or whether they are audited, and they must learn from rewards which contexts call for reports.

The witness graphs come from six families. The four general families are uniform, ring, modular and hub graphs, each with 64 agents and 128 tasks. Each general family has three graphs of witness width four and three of width eight, for 24 general graphs. The other 12 graphs come from the balanced family of Section~\ref{sec:balanced} and a rewired version of it that replaces each witness set at random with probability 0.2. They are built from the construction behind Corollary~\ref{cor:balanced}, and the primary comparison excludes them. Every training condition, or \emph{arm} (Table~\ref{tab:arms}), has two training replicates on each graph, which gives 48 runs on the general graphs.

Each run starts from a pretrained network that has learned silence. This pretraining runs a fully shared network without audits for 512 rollouts from silence probability 0.75, once for each graph and replicate. One rollout contains 512 one-step episodes, 64 per context, and in each episode all agents decide once and at the same time. Except for the head resets of Section~\ref{sec:resets}, every arm starts recovery from this pretrained network, and arms with several networks copy its weights into each network and start any agent-specific parameters at zero. Across the 72 pretrained networks, each agent reports with probability between $1.1\times10^{-5}$ and $1.5\times10^{-5}$.

\emph{Witness groups} come from deterministic saturation-degree coloring of the co-witness graph, which joins every pair of co-witnesses, so no two co-witnesses share a network \citep{brelaz1979}. The \emph{reference allocation} starts from the cheapest allocation that the search of Section~\ref{sec:cost} finds to certify witness groups, with margins $\delta_g=0.5|g|B/n$. The search is exact on the 12 general graphs with at most ten groups and greedy on the other 12. Even when the allocation certifies witness groups, architectures that give each agent its own silence probability can still profit from a \emph{coalition}, a set of agents that stay silent while all others report. A \emph{coalition guard} therefore raises audit probabilities until no coalition earns positive team reward, and it finds coalitions with a maximum-closure reduction \citep{picard1976}. The audit budget of the resulting allocation is the \emph{reference budget}, and the guard adds 6\% to 34\% of it on 18 of the 24 general graphs, including all 12 exactly optimized ones.

The control arm of the primary comparison uses \emph{shuffled groups}, which permute agents across groups and keep every group's size and every agent's audit probability. Witness and shuffled groups start from the same pretrained network with the same parameter count and allocation, and they differ only in which agents share a network. The other arms of Table~\ref{tab:arms} change the sharing, the draws or the allocation. \emph{Uniform audits} give every agent the same audit probability, and \emph{coalition audits} maximize the smallest ratio between a coalition's total deterrence and the benefit of the tasks it completes, which we call the \emph{coalition ratio}. Away from the reference budget, we scale the reference allocation down or raise every agent's audit probability by the same amount (Appendix~\supref{app:protocol-budgets}).

\begin{table}[t]
\centering
\caption{Arms of the experiments. Unless stated, arms use the reference allocation and independent draws. Daggers mark controls designed after the main study.}
\label{tab:arms}
\footnotesize
\begin{tabular}{@{}ll@{}}
\toprule
Arm & Networks and audits \\
\midrule
Witness groups & One network per coloring group \\
Shuffled groups & Witness-group sizes, agents permuted \\
Shuffled, redesigned allocation$^\dagger$ & Shuffled groups, own allocation \\
Witness + uniform audits & Witness groups, uniform audits \\
Witness + coalition audits & Witness groups, coalition audits \\
Fully shared & One network \\
Fully shared, common draw$^\dagger$ & One network, one draw for all agents \\
Shared + scalar offsets & One network, a logit offset per agent \\
Shared + agent IDs & One network, agent ID inputs \citep{terry2020} \\
Independent networks & One network per agent \\
\bottomrule
\end{tabular}
\end{table}

Each network maps the context features through one hidden layer of 32 units to the silence logit. We train with PPO and Adam on sampled rewards \citep{schulman2017,kingma2015} for 4,096 rollouts and use exact expectations only for evaluation (Appendix~\supref{app:protocol}).

We measure \emph{unsafe completion}, the expected fraction of tasks that complete in unsafe contexts, from exact expectations every 64 rollouts and average it over the final quarter of training. \emph{Legitimate completion} is the same fraction in legitimate contexts. A run achieves \emph{joint recovery} when its unsafe completion is below 1\% and its legitimate completion is at least 90\%, which requires selective reporting within the training horizon but not the convergence of Theorem~\ref{thm:recovery}. The primary comparison, specified before training, is the \emph{witness advantage} at the reference budget, shuffled-group minus witness-group unsafe completion in percentage points, or points for short. For its 95\% graph-bootstrap interval we average the two replicates on each graph, weight the eight family-width strata equally and take the percentile interval \citep{efron1993} of 10,000 resamples of the three graphs within each stratum. All other comparisons in these grouping experiments, including two controls designed after the main study, are descriptive.

\section{Results}
\label{sec:results}

\begin{table*}[t]
\centering
\caption{Outcomes on the 24 general graphs. The condition column gives a condition sufficient for exact updates to reach universal reporting, checked at the benefit upper bound, and Met counts the graphs that meet it. Joint recovery counts runs out of 48, and legitimate completion is at least 99.65\% in every row. Dashes mark arms that form no grouping or budgets not run.}
\label{tab:conditions}
\footnotesize
\begin{tabular}{llrrrrr}
\toprule
 & & \multicolumn{3}{c}{Reference budget} & \multicolumn{2}{c}{Eight times the reference budget} \\
\cmidrule(lr){3-5}\cmidrule(lr){6-7}
Arm & Condition & Met & Unsafe completion & Joint recovery & Met & Joint recovery \\
\midrule
Witness groups & Certificate & 24 & 51.20 & 9 & 24 & 44 \\
Shuffled groups & Certificate & 1 & 84.79 & 0 & 24 & 45 \\
Shuffled, redesigned allocation$^\dagger$ & Certificate & 14 & 77.23 & 0 & -- & -- \\
Witness + uniform audits & Certificate & 0 & 99.95 & 0 & 19 & 41 \\
Witness + coalition audits & Certificate & 1 & 88.20 & 1 & 24 & 48 \\
Fully shared & Certificate, $D>kB$ & 0 & 99.99 & 0 & 13 & 35 \\
Fully shared, common draw$^\dagger$ & $D>B$ & 24 & $<$0.01 & 48 & -- & -- \\
Independent networks & Certificate & 0 & 71.33 & 0 & -- & -- \\
Shared + scalar offsets & -- & -- & 91.66 & 4 & -- & 38 \\
Shared + agent IDs & -- & -- & 88.93 & 0 & -- & 41 \\
 
\bottomrule
\end{tabular}
\end{table*}

\subsection{Separating co-witnesses}
\label{sec:primary}

At the reference budget, 51.20\% of unsafe tasks complete on average under witness groups, against 84.79\% under shuffled groups (Table~\ref{tab:conditions}). Witness groups therefore reduce unsafe completion by 33.59 points (95\% graph-bootstrap interval $[21.03,45.13]$), and 99.99\% of legitimate tasks complete in both arms. The two arms differ only in which agents share a network, but the allocation was designed for witness groups, and Section~\ref{sec:allocation} gives shuffled groups an allocation of their own. Only 9 of the 48 witness-group runs achieve joint recovery within the horizon, although the reference allocation certifies witness groups on all 24 graphs.

Of the 48 witness-group runs, 9 end below 1\% unsafe completion, 10 above 99\% and 29 in between. The two replicates of a graph differ by more than 50 points on 5 of the 24 graphs, and their witness advantages differ by that much on 8 (Figure~\supref{fig:graphs}). The bootstrap interval treats the replicates as fixed and does not include this variation across training seeds.

Every general family has a positive mean advantage (Appendix~\supref{app:families}). Eight of the nine joint recoveries occur on ring graphs. Without ring graphs, the witness advantage is 25.70 points across 18 graphs, and 1 of 36 witness-group runs achieves joint recovery.

\subsection{Recovery and the certificate}
\label{sec:allocation}

To compare PPO with Theorem~\ref{thm:recovery}, we pooled the witness, shuffled, fully shared, uniform-audit and coalition-audit arms, the five arms that run at every budget and form a grouping, and checked the certificate in each of the four unsafe contexts of a run. At one, two and four times the reference budget, joint recovery occurs in 10 of 78, 35 of 126 and 129 of 186 runs whose allocation meets the certificate in every unsafe context, and in none of the other 162, 114 and 54 runs. At eight times the reference budget, 7 of the 16 runs that fail this check also recover. All 16 are fully shared networks on width-8 graphs. In the seven that recover, the total deterrence lies between $0.86kB_x$ and $kB_x$ in every unsafe context $x$, so universal silence is a strict local maximum that exact updates started near silence would not leave, and five other runs in that range stay silent (Table~\supref{tab:splits}). The split is post hoc and compares different arms at each budget.

Independent networks, which make every agent its own group, achieve no joint recovery at the reference budget, and the allocation certifies them on none of the 24 graphs. The reference allocation buys each witness group's deterrence from the group's cheapest members and leaves 40 to 51 of the 64 agents unaudited on each graph. Uniform audits likewise certify witness groups on no graph at half, one and two times the reference budget and achieve no joint recovery there. Coalition audits achieve fewer joint recoveries than the reference allocation at one and two times the reference budget and more at four and eight times it (Figure~\supref{fig:budgets} and Table~\supref{tab:allbudgets}).

A later control redesigns the allocation for shuffled groups with the same search and coalition guard and then adjusts it to the reference budget (Appendix~\supref{app:allocation}). The redesigned allocation costs 1.14 times the reference budget on average, so after the adjustment it certifies shuffled groups on only 14 of the 24 graphs. It lowers shuffled-group unsafe completion from 84.79\% to 77.23\%, and witness groups keep an advantage of 26.03 points (exploratory 95\% graph-bootstrap interval $[11.77,39.93]$). At twice the reference budget, the redesigned allocation certifies shuffled groups on every graph, and the advantage falls from 18.91 points under the reference allocation to 4.78 points (interval $[-3.98,13.55]$). Joint recovery then occurs in 18 of 48 witness-group runs and 13 of 48 shuffled-group runs.

\subsection{Allocating audits at equal cost}
\label{sec:newalloc}

To test whether the certificate helps an auditor decide where to place audits when the grouping is fixed, we designed further runs after the experiments above. They use 24 new general graphs, three for each family and width, with eight training replicates per graph, and they keep witness groups throughout. Every agent has fine $F_i=16B/n$ and audit price $\kappa_i=1$, so allocations with the same audit budget buy the same total deterrence. On each graph, all allocations start recovery from the same eight pretrained networks. The reference allocation, coalition audits and uniform audits are built as in Section~\ref{sec:study}, and the reference budget is the budget of every allocation. Checks on the inputs during the main runs showed that the coalition ratio can have several maximizing allocations, so before any outcome was inspected we added two allocations that keep the coalition ratio within $10^{-8}$ of its maximum. Among these, \emph{tie-aware coalition audits} maximize the smallest certificate margin per group member along the removal order of the reference allocation's certificate, and \emph{minimum-$L_2$ coalition audits} minimize the sum of squared audit probabilities, which uses neither the groups nor the certificate. The primary contrast, specified before training, is coalition minus reference unsafe completion at the reference budget from learned silence.

\begin{table}[t]
\centering
\caption{Audit allocation at the reference budget from learned silence on 24 new general graphs, eight replicates each. Certified is the number of graphs on which the allocation meets the certificate at the benefit upper bound, completion is in percent, and joint recovery is out of 192 runs. The last two rows are controls specified later on the same graphs.}
\label{tab:newalloc}
\footnotesize
\setlength{\tabcolsep}{3pt}
\begin{tabular}{lrrrr}
\toprule
Allocation & Certified & Unsafe & Legitimate & Joint \\
\midrule
Reference & 24 & 57.41 & 99.99 & 22 \\
Coalition & 0 & 99.62 & 99.99 & 0 \\
Uniform & 0 & 99.98 & 99.99 & 0 \\
Tie-aware coalition & 24 & 58.02 & 99.99 & 26 \\
Minimum-$L_2$ coalition & 0 & 99.98 & 99.99 & 0 \\
\bottomrule
\end{tabular}
\end{table}

Reference audits lower unsafe completion by 42.22 points relative to coalition audits (95\% graph-bootstrap interval $[38.23,46.13]$), and the graph average favors them on 23 of the 24 graphs (Table~\ref{tab:newalloc}). The reference allocation meets the certificate on all 24 graphs, and coalition audits and uniform audits meet it on none. Tie-aware coalition audits meet it on all 24 graphs and lower unsafe completion by 41.61 points relative to coalition audits (interval $[37.69,45.54]$), within 0.61 points of reference audits (interval $[-3.42,4.42]$). Minimum-$L_2$ coalition audits meet it on no graph and leave unsafe completion at 99.98\%, 41.96 points above tie-aware audits (interval $[38.20,45.76]$). Among allocations that tie on the coalition ratio, the one chosen by the certificate therefore lowers unsafe completion and the one chosen by smoothing does not. Choosing by the certificate changes several features of an allocation at once, so these runs do not isolate the certificate as the cause.

Mean unsafe completion under the two certified allocations is still 57.41\% and 58.02\%, and only 22 of 192 reference runs and 26 of 192 tie-aware runs achieve joint recovery, all on width-4 graphs and 18 of each on ring graphs. At four times the budget from learned silence, coalition minus reference is 1.12 points (interval $[-1.71,4.06]$), which neither ranks the two nor establishes that they are equivalent. After a reset to silence probability 0.95 at the reference budget, reference audits reach 33.23\% unsafe completion against 99.27\% for coalition audits, and 49 of 192 reference runs achieve joint recovery. At four times the budget after this reset, reference and tie-aware audits bring unsafe completion below 0.001\%, but for each of them all 96 width-4 runs end with legitimate completion near zero and all 96 width-8 runs achieve joint recovery. Fines are off in legitimate contexts. Base costs, the shared hidden features and the optimizer could all contribute to this split, and the experiment does not isolate the cause (Appendix~\supref{app:newalloc}).

\subsection{Frequent reports and exact gradients}
\label{sec:resets}

A head reset sets the weights of the output layer to zero and its bias to the logit of silence probability 0.95 or 0.75. The resets apply to the witness, shuffled, fully shared, scalar-offset and agent-ID arms on all 36 graphs, and both levels were fixed before training. Over the two levels, the fully shared network ends silent or reporting as the sign of the exact gradient of its output bias at the reset predicts in 140 of its 144 runs (Appendix~\supref{app:secondary}). After a reset to 0.95, $p_*$ is at most 0.85 in every unsafe context of every reference allocation, so the exact updates of Section~\ref{sec:obstruction} favor silence from this start. The fully shared network returns to at least 99.99\% unsafe completion in all 72 runs. Witness groups achieve joint recovery in 32 of 72 runs, against 19 of 72 without the reset, although the reset also changes their starting network. A reset to 0.75 brings out the legitimate-context obstruction of Section~\ref{sec:dependence}, with legitimate completion below 90\% in 31 or 32 of the 36 width-8 runs of each of the five arms and in at most 5 of the 36 width-4 runs.

\subsection{Recovery with a common draw}
\label{sec:coupling}

A later control keeps the pretrained fully shared network but makes all of its agents act on the common draw of Section~\ref{sec:dependence} in every episode, on all 24 general graphs and both replicates at the reference budget. Every general reference allocation has $D>B$ in unsafe contexts even at the benefit upper bound, and every legitimate context keeps at least $0.65B$ of benefit against base costs of $0.5B$. The expected reward under a common draw therefore favors reporting in every unsafe context and silence in every legitimate one, with no interior stationary point in either.

With a common draw, the network achieves joint recovery in all 48 runs, against none of the 48 independent-draw runs in Table~\ref{tab:conditions}. Its mean unsafe completion is 0.00068\%, and legitimate completion is at least 99.99\% in every run. The common draw also makes the score-function factor of PPO's update, averaged over agents, equal that of the joint policy instead of $1/n$ of it, and it restricts exploration to two joint actions, all agents silent or all reporting. The experiment establishes joint recovery under the implemented intervention and does not separate these changes from the change in the expected reward (Appendix~\supref{app:coupling}).

\subsection{What slows recovery}
\label{sec:diagnostics}

Across the 72 pretrained networks, the median probability that a rollout contains no unsafe-context report is 81.74\%. On a subset fixed before training, the first graph of each of the six families at both widths, extending training from 4,096 to 16,384 rollouts lowers unsafe completion from 42.44\% to 26.97\% for witness groups and from 92.34\% to 78.67\% for shuffled groups. Witness-group joint recoveries rise from 8 to 14 of 24, and shuffled groups have none at either length. Certificates that need more rounds of elimination may also slow recovery, because a group removed in a later round is guaranteed a negative derivative only after earlier groups approach reporting (Table~\supref{tab:rounds}). Designing the groups and the allocation from witness sets in which 10\% or 25\% of tasks have the wrong witnesses raises witness-group unsafe completion on the general graphs from 51.20\% to 66.45\% and 65.21\% (Appendix~\supref{app:secondary}).

\section{Related work}
\label{sec:related}

Collusion among agents is one of the risks that systems of many AI agents add to those of single agents \citep{hammond2025}. AI control protocols that use one instance of an untrusted model to monitor another must guard against collusion between the two \citep{greenblatt2024}. The independent-draw obstruction in our game needs no communication or agreement, so cutting the unauthorized communication that OpenAI identifies as one cause of the incident \citep{openai2026incident} would not prevent it.

The audits in our game follow the economics of enforcement, in which a fine deters a violation through its expected value \citep{becker1968,polinsky2000}. Soliciting reports from a rival can deter violations with fewer inspections \citep{berentsen2008}, self-reporting can lower enforcement costs \citep{kaplow1994selfreport}, and audits can be optimized against strategic recipients \citep{das2026audit}. Incentive design for learning agents shapes rewards by differentiating through the recipients' updates \citep{yang2020} or through the learning of a whole population \citep{yang2022}, while we ask how the grouping of learners into shared policies and the way they draw actions change the audits they need.

Cooperative multi-agent reinforcement learning can fail to reach the team optimum, as when independent learners in coordination games converge to suboptimal equilibria \citep{claus1998}. In games where every agent receives the same reward, strict equilibria are local maxima of the expected reward and stable points of gradient play \citep{zhang2024}. In relative overgeneralization as \citet{wei2016} define it, the actions of a suboptimal equilibrium earn more on average when paired with all possible partner actions. Here, when audits make reporting optimal, the expected reward of a fully shared policy favors reporting at a silence probability of one half and favors silence only above $p_*>1/2$ (Section~\ref{sec:obstruction}), so the obstruction is not relative overgeneralization. Reusing one behavioral strategy at several decision points of an imperfect-recall game gives a polynomial expected payoff \citep{gimbert2020polynomial,tewolde2023imperfect}, and such objectives can have suboptimal local maxima \citep{tewolde2026decision}. The elimination rule of the reporting game is a threshold cascade on the incidence graph of groups and tasks, related to targeting and partial-incentive problems \citep{como2022targeting,cordasco2015incentives}. For this objective, Theorem~\ref{thm:recovery} ties the outcome of strict elimination to the limit of exact logit updates, and Proposition~\ref{prop:cost} gives the cost of a removal order.

Parameter sharing makes learning more efficient for similar agents \citep{gupta2017,terry2020,yu2022}, selective sharing lets groups of agents specialize \citep{christianos2021}, and a shared network can support a set level of behavioral diversity \citep{bettini2024}. Sharing can also lose reward when the optimum requires agents to act differently \citep{kuba2022,fu2022}. Universal reporting asks every agent to act alike, so a shared policy can represent it. Our obstruction lies on the gradient path from silence to that policy. Shared random signals have been used to coordinate exploration \citep{mahajan2019,chen2022}, and the common draw is a public correlation device \citep{aumann1974}.

\section{Discussion}
\label{sec:discussion}

The audit budget alone does not determine recovery. At the reference budget, witness groups achieve joint recovery in 9 of 48 runs with the reference allocation and in none with uniform audits, and the fully shared network achieves it in none of 48 runs with independent draws and in all 48 with a common draw. On the 24 further graphs, allocations with the same budget and total deterrence differ by 42.22 points in unsafe completion, and those that meet the certificate do better, even among allocations that tie on the coalition ratio. Apart from boundary cases, the certificate decides recovery under exact updates that start near universal silence. Under PPO, up to four times the reference budget, every joint recovery in an arm that forms a grouping came from a run that met the certificate in each unsafe context, but this is an association, and only 10 of 78 certified runs on the general graphs and 22 of 192 on the further graphs recover at the reference budget. An audit requirement for learning agents therefore has to name the allocation, the grouping, the way actions are drawn and the training algorithm it assumes, and our experiments do not establish minimum audit budgets for any training algorithm.

In our model, the results support choosing the grouping and the allocation together, and choosing the allocation with the certificate rather than the coalition ratio alone. The witness advantage shrinks when shuffled groups receive their own allocation and both receive a larger audit budget, so the data give limited support for preferring witness groups in general. A common draw removes the obstruction from the exact expected reward without separating policies, but the team then reports or stays silent as a unit.

We have not tested these results on deployed teams of AI agents. The game assumes that a report stops all of the reporter's tasks, that the tasks of a context are all unsafe or all legitimate and that the auditor knows which contexts are unsafe. Except in the witness-error experiment (Appendix~\supref{app:secondary}), the auditor also designs the allocation from the true witness sets, and audits in practice can be limited by the evidence the auditor can access \citep{casper2024}. In a context that mixes unsafe and legitimate tasks, a report would have to target single tasks, and a common draw would stop or complete them together. A game in which one joint decision covers both kinds of task would test whether correlated reporting can stop the unsafe tasks while preserving the legitimate ones. The agents also learn which of eight recurring contexts call for reports, so the experiments do not test whether agents recognize unfamiliar unsafe conduct.
\section*{AI use statement}
ChatGPT and Claude were used in formatting and line editing the paper. They were also used in the creation, editing and packaging of the experimental code and results into structured ZIPs included in the supplementary materials. Responsibility for the manuscript and its claims remains with the authors.

\section*{Reproducibility statement}
The supplement gives the proofs, protocols, every run's outcome and scripts that regenerate every table and figure.

\bibliographystyle{ACM-Reference-Format}
\bibliography{references}

@article{christianos2021,
  title={Scaling Multi-Agent Reinforcement Learning with Selective Parameter Sharing},
  author={Christianos, Filippos and Papoudakis, Georgios and Rahman, Muhammad A. and Albrecht, Stefano V.},
  journal={Proceedings of the 38th International Conference on Machine Learning, PMLR},
  volume={139},
  pages={1989--1998},
  year={2021},
  url={https://proceedings.mlr.press/v139/christianos21a.html}
}

@article{bettini2024,
  title={Controlling Behavioral Diversity in Multi-Agent Reinforcement Learning},
  author={Bettini, Matteo and Kortvelesy, Ryan and Prorok, Amanda},
  journal={Proceedings of the 41st International Conference on Machine Learning, PMLR},
  volume={235},
  pages={3611--3636},
  year={2024},
  url={https://proceedings.mlr.press/v235/bettini24a.html}
}

@article{zhao2024,
  title={Optimistic Multi-Agent Policy Gradient},
  author={Zhao, Wenshuai and Zhao, Yi and Li, Zhiyuan and Kannala, Juho and Pajarinen, Joni},
  journal={Proceedings of the 41st International Conference on Machine Learning, PMLR},
  volume={235},
  pages={61186--61202},
  year={2024},
  url={https://proceedings.mlr.press/v235/zhao24v.html}
}

@misc{terry2020,
  title={Revisiting Parameter Sharing in Multi-Agent Deep Reinforcement Learning},
  author={Terry, J. K. and Grammel, Nathaniel and Son, Sanghyun and Black, Benjamin and Agrawal, Aakriti},
  year={2020},
  eprint={2005.13625},
  archiveprefix={arXiv},
  primaryclass={cs.LG}
}

@article{cheilaris2010,
  title={Unique-Maximum and Conflict-Free Coloring for Hypergraphs and Tree Graphs},
  author={Cheilaris, Panagiotis and Keszegh, Bal{\'a}zs and P{\'a}lv{\"o}lgyi, D{\"o}m{\"o}t{\"o}r},
  journal={SIAM Journal on Discrete Mathematics},
  volume={27},
  number={4},
  pages={1775--1787},
  year={2013},
  doi={10.1137/120880471}
}

@article{picard1976,
  title={Maximal Closure of a Graph and Applications to Combinatorial Problems},
  author={Picard, Jean-Claude},
  journal={Management Science},
  volume={22},
  number={11},
  pages={1268--1272},
  year={1976},
  doi={10.1287/mnsc.22.11.1268},
  url={https://pubsonline.informs.org/doi/10.1287/mnsc.22.11.1268}
}

@article{brelaz1979,
  title={New Methods to Color the Vertices of a Graph},
  author={Br{\'e}laz, Daniel},
  journal={Communications of the ACM},
  volume={22},
  number={4},
  pages={251--256},
  year={1979},
  doi={10.1145/359094.359101},
  url={https://dl.acm.org/doi/10.1145/359094.359101}
}

@misc{schulman2017,
  title={Proximal Policy Optimization Algorithms},
  author={Schulman, John and Wolski, Filip and Dhariwal, Prafulla and Radford, Alec and Klimov, Oleg},
  year={2017},
  eprint={1707.06347},
  archiveprefix={arXiv},
  primaryclass={cs.LG}
}

@misc{kingma2015,
  title={{Adam}: A Method for Stochastic Optimization},
  author={Kingma, Diederik P. and Ba, Jimmy},
  howpublished={In 3rd International Conference on Learning Representations (ICLR 2015), San Diego, CA, USA},
  year={2015},
  url={https://arxiv.org/abs/1412.6980}
}

@inproceedings{mahajan2019,
  title={{MAVEN}: Multi-Agent Variational Exploration},
  author={Mahajan, Anuj and Rashid, Tabish and Samvelyan, Mikayel and Whiteson, Shimon},
  booktitle={Advances in Neural Information Processing Systems},
  volume={32},
  pages={7613--7624},
  publisher={Curran Associates, Inc.},
  address={Red Hook, NY, USA},
  year={2019},
  url={https://proceedings.neurips.cc/paper_files/paper/2019/hash/f816dc0acface7498e10496222e9db10-Abstract.html}
}

@inproceedings{chen2022,
  title={Signal Instructed Coordination in Cooperative Multi-agent Reinforcement Learning},
  author={Chen, Liheng and Guo, Hongyi and Du, Yali and Fang, Fei and Zhang, Haifeng and Zhang, Weinan and Yu, Yong},
  booktitle={Distributed Artificial Intelligence. DAI 2021},
  series={Lecture Notes in Computer Science},
  volume={13170},
  pages={185--205},
  publisher={Springer},
  address={Cham, Switzerland},
  year={2022},
  doi={10.1007/978-3-030-94662-3_12},
  url={https://wnzhang.net/papers/2021-dai-sic.pdf}
}

@inproceedings{casper2024,
  title={Black-Box Access Is Insufficient for Rigorous {AI} Audits},
  author={Casper, Stephen and Ezell, Carson and Siegmann, Charlotte and Kolt, Noam and Curtis, Taylor Lynn and Bucknall, Benjamin and Haupt, Andreas and Wei, Kevin and Scheurer, J{\'e}r{\'e}my and Hobbhahn, Marius and Sharkey, Lee and Krishna, Satyapriya and von Hagen, Marvin and Alberti, Silas and Chan, Alan and Sun, Qinyi and Gerovitch, Michael and Bau, David and Tegmark, Max and Krueger, David and Hadfield-Menell, Dylan},
  booktitle={Proceedings of the 2024 ACM Conference on Fairness, Accountability, and Transparency},
  series={FAccT '24},
  location={Rio de Janeiro, Brazil},
  pages={2254--2272},
  publisher={Association for Computing Machinery},
  address={New York, NY, USA},
  year={2024},
  doi={10.1145/3630106.3659037}
}

@inproceedings{yang2020,
  title={Learning to Incentivize Other Learning Agents},
  author={Yang, Jiachen and Li, Ang and Farajtabar, Mehrdad and Sunehag, Peter and Hughes, Edward and Zha, Hongyuan},
  booktitle={Advances in Neural Information Processing Systems},
  volume={33},
  pages={15208--15219},
  publisher={Curran Associates, Inc.},
  address={Red Hook, NY, USA},
  year={2020},
  url={https://proceedings.neurips.cc/paper/2020/hash/ad7ed5d47b9baceb12045a929e7e2f66-Abstract.html}
}

@article{greenblatt2024,
  title={{AI} Control: Improving Safety Despite Intentional Subversion},
  author={Greenblatt, Ryan and Shlegeris, Buck and Sachan, Kshitij and Roger, Fabien},
  journal={Proceedings of the 41st International Conference on Machine Learning, PMLR},
  volume={235},
  pages={16295--16336},
  year={2024},
  url={https://proceedings.mlr.press/v235/greenblatt24a.html}
}

@misc{hammond2025,
  title={Multi-Agent Risks from Advanced {AI}},
  author={Hammond, Lewis and Chan, Alan and Clifton, Jesse and Hoelscher-Obermaier, Jason and Khan, Akbir and McLean, Euan and Smith, Chandler and Barfuss, Wolfram and Foerster, Jakob and Gaven{\v{c}}iak, Tom{\'a}{\v{s}} and others},
  howpublished={Technical Report 1, Cooperative AI Foundation},
  year={2025},
  eprint={2502.14143},
  archiveprefix={arXiv},
  primaryclass={cs.MA}
}

@misc{amodei2016,
  title={Concrete Problems in {AI} Safety},
  author={Amodei, Dario and Olah, Chris and Steinhardt, Jacob and Christiano, Paul and Schulman, John and Man{\'e}, Dan},
  year={2016},
  eprint={1606.06565},
  archiveprefix={arXiv},
  primaryclass={cs.AI}
}

@inproceedings{skalse2022,
  title={Defining and Characterizing Reward Hacking},
  author={Skalse, Joar and Howe, Nikolaus H. R. and Krasheninnikov, Dmitrii and Krueger, David},
  booktitle={Advances in Neural Information Processing Systems},
  volume={35},
  pages={9460--9471},
  publisher={Curran Associates, Inc.},
  address={Red Hook, NY, USA},
  year={2022},
  doi={10.52202/068431-0687}
}

@article{becker1968,
  title={Crime and Punishment: An Economic Approach},
  author={Becker, Gary S.},
  journal={Journal of Political Economy},
  volume={76},
  number={2},
  pages={169--217},
  year={1968},
  doi={10.1086/259394}
}

@misc{greenblatt2026incident,
  title={Brief Independent Investigation of Agents' Behavior, Reasoning and Collaboration in the {OpenAI / Hugging Face} Hacking Incident},
  author={Greenblatt, Ryan and Cotra, Ajeya and Wijk, Hjalmar},
  howpublished={METR},
  year={2026},
  month=aug,
  url={https://metr.org/blog/2026-08-26-openai-hugging-face-incident-investigation/}
}

@misc{openai2026incident,
  title={The {Hugging Face} Incident and the Road Ahead},
  author={{OpenAI}},
  year={2026},
  month=aug,
  url={https://openai.com/index/hugging-face-incident-and-the-road-ahead/}
}

@misc{paglieri2026swarm,
  title={A Case Study on Emergent Cheating and Whistleblowing in Autonomous Research Swarms},
  author={Paglieri, Davide and Cross, Logan and Genewein, Tim and Leibo, Joel Z. and Tomasev, Nenad and Vezhnevets, Alexander Sasha},
  year={2026},
  eprint={2609.04170},
  archiveprefix={arXiv},
  primaryclass={cs.AI}
}

@misc{joglekar2025confessions,
  title={Training {LLMs} for Honesty via Confessions},
  author={Joglekar, Manas and Chen, Jeremy and Wu, Gabriel and Yosinski, Jason and Wang, Jasmine and Barak, Boaz and Glaese, Amelia},
  year={2025},
  eprint={2512.08093},
  archiveprefix={arXiv},
  primaryclass={cs.LG}
}

@misc{lee2026selfreport,
  title={Training Agents to Self-Report Misbehavior},
  author={Lee, Bruce W. and {Chen Yueh-Han} and Korbak, Tomek},
  year={2026},
  eprint={2602.22303},
  archiveprefix={arXiv},
  primaryclass={cs.LG}
}

@article{polinsky2000,
  title={The Economic Theory of Public Enforcement of Law},
  author={Polinsky, A. Mitchell and Shavell, Steven},
  journal={Journal of Economic Literature},
  volume={38},
  number={1},
  pages={45--76},
  year={2000},
  doi={10.1257/jel.38.1.45}
}

@article{berentsen2008,
  title={On Cheating, Doping and Whistleblowing},
  author={Berentsen, Aleksander and Bruegger, Esther and Loertscher, Simon},
  journal={European Journal of Political Economy},
  volume={24},
  number={2},
  pages={415--436},
  year={2008}
}

@inproceedings{yang2022,
  title={Adaptive Incentive Design with Multi-Agent Meta-Gradient Reinforcement Learning},
  author={Yang, Jiachen and Wang, Ethan and Trivedi, Rakshit and Zhao, Tuo and Zha, Hongyuan},
  booktitle={Proceedings of the 21st International Conference on Autonomous Agents and Multiagent Systems},
  pages={1436--1445},
  publisher={International Foundation for Autonomous Agents and Multiagent Systems},
  address={Richland, SC, USA},
  year={2022}
}

@inproceedings{claus1998,
  title={The Dynamics of Reinforcement Learning in Cooperative Multiagent Systems},
  author={Claus, Caroline and Boutilier, Craig},
  booktitle={Proceedings of the Fifteenth National Conference on Artificial Intelligence},
  pages={746--752},
  publisher={AAAI Press},
  address={Menlo Park, CA, USA},
  year={1998}
}

@article{wei2016,
  title={Lenient Learning in Independent-Learner Stochastic Cooperative Games},
  author={Wei, Ermo and Luke, Sean},
  journal={Journal of Machine Learning Research},
  volume={17},
  number={84},
  pages={1--42},
  year={2016},
  url={https://jmlr.org/papers/v17/15-417.html}
}

@article{zhang2024,
  title={Gradient Play in Stochastic Games: Stationary Points, Convergence, and Sample Complexity},
  author={Zhang, Runyu and Ren, Zhaolin and Li, Na},
  journal={IEEE Transactions on Automatic Control},
  volume={69},
  number={10},
  pages={6499--6514},
  year={2024},
  doi={10.1109/TAC.2024.3387208}
}

@incollection{gupta2017,
  title={Cooperative Multi-agent Control Using Deep Reinforcement Learning},
  author={Gupta, Jayesh K. and Egorov, Maxim and Kochenderfer, Mykel},
  booktitle={Autonomous Agents and Multiagent Systems. AAMAS 2017 Workshops},
  series={Lecture Notes in Computer Science},
  volume={10642},
  pages={66--83},
  publisher={Springer},
  address={Cham, Switzerland},
  year={2017},
  doi={10.1007/978-3-319-71682-4_5}
}

@inproceedings{yu2022,
  title={The Surprising Effectiveness of {PPO} in Cooperative Multi-Agent Games},
  author={Yu, Chao and Velu, Akash and Vinitsky, Eugene and Gao, Jiaxuan and Wang, Yu and Bayen, Alexandre and Wu, Yi},
  booktitle={Advances in Neural Information Processing Systems},
  volume={35},
  pages={24611--24624},
  publisher={Curran Associates, Inc.},
  address={Red Hook, NY, USA},
  year={2022},
  doi={10.52202/068431-1787}
}

@article{williams1992,
  title={Simple Statistical Gradient-Following Algorithms for Connectionist Reinforcement Learning},
  author={Williams, Ronald J.},
  journal={Machine Learning},
  volume={8},
  number={3--4},
  pages={229--256},
  year={1992},
  doi={10.1007/BF00992696}
}

@inproceedings{sutton2000,
  title={Policy Gradient Methods for Reinforcement Learning with Function Approximation},
  author={Sutton, Richard S. and McAllester, David and Singh, Satinder and Mansour, Yishay},
  booktitle={Advances in Neural Information Processing Systems},
  volume={12},
  pages={1057--1063},
  publisher={MIT Press},
  address={Cambridge, MA, USA},
  year={2000}
}

@inproceedings{kakade2001,
  title={A Natural Policy Gradient},
  author={Kakade, Sham M.},
  booktitle={Advances in Neural Information Processing Systems},
  volume={14},
  pages={1531--1538},
  publisher={MIT Press},
  address={Cambridge, MA, USA},
  year={2001}
}

@article{amari1998,
  title={Natural Gradient Works Efficiently in Learning},
  author={Amari, Shun-ichi},
  journal={Neural Computation},
  volume={10},
  number={2},
  pages={251--276},
  year={1998},
  doi={10.1162/089976698300017746}
}

@article{aumann1974,
  title={Subjectivity and Correlation in Randomized Strategies},
  author={Aumann, Robert J.},
  journal={Journal of Mathematical Economics},
  volume={1},
  number={1},
  pages={67--96},
  year={1974},
  doi={10.1016/0304-4068(74)90037-8}
}

@book{efron1993,
  title={An Introduction to the Bootstrap},
  author={Efron, Bradley and Tibshirani, Robert J.},
  publisher={Chapman and Hall},
  address={New York},
  year={1993}
}

@misc{lin2026scaling,
  title={Scaling Long-Running Autonomous Coding},
  author={Lin, Wilson},
  howpublished={Cursor blog},
  year={2026},
  month=jan,
  url={https://cursor.com/blog/scaling-agents}
}

@misc{bowman2022,
  title={Measuring Progress on Scalable Oversight for Large Language Models},
  author={Bowman, Samuel R. and Hyun, Jeeyoon and Perez, Ethan and Chen, Edwin and Pettit, Craig and Heiner, Scott and Luko{\v{s}}i{\={u}}t{\.{e}}, Kamil{\.{e}} and others},
  year={2022},
  eprint={2211.03540},
  archiveprefix={arXiv},
  primaryclass={cs.HC}
}

@misc{kuba2022,
  title={Trust Region Policy Optimisation in Multi-Agent Reinforcement Learning},
  author={Kuba, Jakub Grudzien and Chen, Ruiqing and Wen, Muning and Wen, Ying and Sun, Fanglei and Wang, Jun and Yang, Yaodong},
  howpublished={In The Tenth International Conference on Learning Representations (ICLR 2022), Virtual Event},
  year={2022},
  url={https://openreview.net/forum?id=EcGGFkNTxdJ}
}

@article{fu2022,
  title={Revisiting Some Common Practices in Cooperative Multi-Agent Reinforcement Learning},
  author={Fu, Wei and Yu, Chao and Xu, Zelai and Yang, Jiaqi and Wu, Yi},
  journal={Proceedings of the 39th International Conference on Machine Learning, PMLR},
  volume={162},
  pages={6863--6877},
  year={2022},
  url={https://proceedings.mlr.press/v162/fu22d.html}
}

@inproceedings{gimbert2020polynomial,
  author = {Gimbert, Hugo and Paul, Soumyajit and Srivathsan, B.},
  title = {A Bridge between Polynomial Optimization and Games with Imperfect Recall},
  booktitle = {Proceedings of the 19th International Conference on Autonomous Agents and MultiAgent Systems},
  pages = {456--464},
  publisher = {International Foundation for Autonomous Agents and Multiagent Systems},
  address = {Auckland, New Zealand},
  year = {2020},
  url = {https://www.ifaamas.org/Proceedings/aamas2020/pdfs/p456.pdf}
}

@inproceedings{tewolde2023imperfect,
  author = {Tewolde, Emanuel and Oesterheld, Caspar and Conitzer, Vincent and Goldberg, Paul W.},
  title = {The Computational Complexity of Single-Player Imperfect-Recall Games},
  booktitle = {Proceedings of the Thirty-Second International Joint Conference on Artificial Intelligence},
  pages = {2878--2887},
  publisher = {International Joint Conferences on Artificial Intelligence Organization},
  address = {Macao, China},
  year = {2023},
  doi = {10.24963/ijcai.2023/321},
  url = {https://www.ijcai.org/proceedings/2023/321}
}

@misc{tewolde2026decision,
  author = {Tewolde, Emanuel and Zhang, Brian Hu and Anagnostides, Ioannis and Sandholm, Tuomas and Conitzer, Vincent},
  title = {Decision Making under Imperfect Recall: Algorithms and Benchmarks},
  year = {2026},
  eprint = {2602.15252},
  archivePrefix = {arXiv},
  primaryClass = {cs.GT},
  doi = {10.48550/arXiv.2602.15252},
  url = {https://arxiv.org/abs/2602.15252v1}
}

@article{como2022targeting,
  author = {Como, Giacomo and Durand, St{\'e}phane and Fagnani, Fabio},
  title = {Optimal Targeting in Super-Modular Games},
  journal = {IEEE Transactions on Automatic Control},
  volume = {67},
  number = {12},
  pages = {6366--6380},
  year = {2022},
  doi = {10.1109/TAC.2021.3129733},
  url = {https://doi.org/10.1109/TAC.2021.3129733}
}

@inproceedings{cordasco2015incentives,
  author = {Cordasco, Gennaro and Gargano, Luisa and Rescigno, Adele A. and Vaccaro, Ugo},
  title = {Optimizing Spread of Influence in Social Networks via Partial Incentives},
  booktitle = {Structural Information and Communication Complexity},
  editor = {Scheideler, Christian},
  series = {Lecture Notes in Computer Science},
  volume = {9439},
  pages = {119--134},
  publisher = {Springer},
  address = {Cham, Switzerland},
  year = {2015},
  doi = {10.1007/978-3-319-25258-2_9},
  url = {https://doi.org/10.1007/978-3-319-25258-2_9}
}

@article{kaplow1994selfreport,
  title={Optimal Law Enforcement with Self-Reporting of Behavior},
  author={Kaplow, Louis and Shavell, Steven},
  journal={Journal of Political Economy},
  volume={102},
  number={3},
  pages={583--606},
  year={1994},
  doi={10.1086/261947},
  url={https://www.journals.uchicago.edu/doi/10.1086/261947}
}

@article{das2026audit,
  title={Optimally Auditing Adversarial Agents},
  author={Das, Sanmay and Yu, Fang-Yi and Zhang, Yuang},
  journal={Proceedings of the AAAI Conference on Artificial Intelligence},
  volume={40},
  number={20},
  pages={16787--16794},
  year={2026},
  doi={10.1609/aaai.v40i20.38722},
  url={https://ojs.aaai.org/index.php/AAAI/article/view/38722}
}

\clearpage
\appendix
\setlength{\emergencystretch}{1em}
\setcounter{theorem}{0}
\setcounter{table}{0}
\setcounter{figure}{0}
\setcounter{equation}{0}
\renewcommand{\thetheorem}{S\arabic{theorem}}
\renewcommand{\thetable}{S\arabic{table}}
\renewcommand{\thefigure}{S\arabic{figure}}
\renewcommand{\theequation}{S\arabic{equation}}
\renewcommand{\theHtheorem}{appendix.\arabic{theorem}}
\renewcommand{\theHtable}{appendix.\arabic{table}}
\renewcommand{\theHfigure}{appendix.\arabic{figure}}
\renewcommand{\theHequation}{appendix.\arabic{equation}}
\noindent Numbers with an S prefix refer to this supplement, and numbers without one refer to the main paper.
\section{Proofs and certificate construction}
\label{app:proofs}

The results in this appendix concern one fixed context, positive task benefits, nonnegative base costs and action draws that are independent given the group probabilities. Every group is nonempty and every initial logit is finite, so every initial probability lies in $(0,1)$. In part (a) of Theorem~\mainref{thm:recovery}, the loads may use an upper bound on the context benefits, because lowering a benefit only lowers the positive terms of the derivative. Part (b) and Proposition~\ref{prop:residue} need the loads at the actual benefits.

\subsection{Strict elimination and exact learning}

All groups update at every step, so the proof has to show that a group converges even though the loads it faces shrink only in the limit. Write $\partial_gJ=\partial J/\partial p_g$. The exact updates covered by Theorem~\mainref{thm:recovery} are
\begin{align}
 \theta_g^{t+1}&=\theta_g^t+\eta\lambda_g\,p_g^t(1-p_g^t)\partial_gJ(p^t),
 &&\text{gradient ascent},\label{eq:pg}\\
 \theta_g^{t+1}&=\theta_g^t+\eta\lambda_g\,\partial_gJ(p^t),
 &&\text{natural-gradient ascent},\label{eq:npg}
\end{align}
where $\eta>0$ is fixed and finite and every $\lambda_g>0$ is fixed. For one Bernoulli draw, the logit score is $s-p_g$ and the Fisher factor is $p_g(1-p_g)$. Under the joint product distribution, the score of group $g$ is $\sum_{i\in g}(s_i-p_g)$, its variance is $|g|p_g(1-p_g)$, and the Fisher entries between groups are zero. Dividing the logit gradient by this factor gives Equation~\eqref{eq:npg} with $\lambda_g=1/|g|$, and using one Bernoulli Fisher factor per group gives $\lambda_g=1$. The theorem covers both. Neither update includes momentum, entropy regularization or parameters shared between groups.

\begin{proof}[Proof of Theorem~\mainref{thm:recovery}(a)]
Choose a removal order $g_1,\ldots,g_G$ and let $S_j=\{g_j,\ldots,g_G\}$. Strictness gives a positive slack $\epsilon_j=D_{g_j}-L_{g_j}(S_j)$ for each removal.

For $g_1$, every product multiplying $m_{eg_1}b_e$ in Equation~(\mainref{eq:derivative}) lies in $[0,1]$, so $\partial_{g_1}J(p)\leq-\epsilon_1$ throughout the probability cube. Under Equation~\eqref{eq:npg}, the logit of $g_1$ decreases by at least $\eta\lambda_{g_1}\epsilon_1$ per step and tends to $-\infty$. Under Equation~\eqref{eq:pg}, the logit is strictly decreasing. If it had a finite limit, $p_{g_1}(1-p_{g_1})$ would have a positive limit, and the increments would eventually stay below a fixed negative number, which is impossible for a convergent sequence. Thus $\theta_{g_1}\to-\infty$ and $p_{g_1}\to0$ under either update.

Suppose $g_1,\ldots,g_{j-1}$ tend to zero. Split the positive derivative terms of $g_j$ into tasks whose witnesses all lie in $S_j$ and tasks with a witness in an earlier group. The first collection is bounded by $L_{g_j}(S_j)$. Every term in the second contains a positive power of an earlier probability. All other factors are at most one and the task set is finite, so these terms tend to zero uniformly over the other probabilities. Hence, after some finite time,
\[
 \partial_{g_j}J(p^t)\leq-\epsilon_j/2.
\]
The logit is finite at that time, and the argument for $g_1$ then gives $p_{g_j}\to0$. Induction proves the claim. The step size needs no upper bound, because the proof uses only eventual derivative signs and finite logits.
\end{proof}

The removal order does not affect which groups survive. Removing a group only lowers every remaining $L_g(S)$, so a group that becomes eligible stays eligible after further removals. Iterating this monotone rule to closure gives the same surviving set for any choice among eligible groups, and positive prescribed margins preserve this property. We call this set the residue of strict elimination.

\begin{proof}[Proof of Theorem~\mainref{thm:recovery}(b)]
Let $S$ be the residue. For $g\in S$, split the positive terms of $\partial_gJ$ in Equation~(\mainref{eq:derivative}) into a first collection, the tasks whose witnesses all lie in groups of $S$, and a second collection of all other tasks. The terms of the second collection are nonnegative. Each product in the first collection has total exponent at most $M=\max_e|W_e|$. If $p_h\geq1-\beta$ for every $h\in S$, each such product is therefore at least $(1-\beta)^M$, and
\[
 \partial_gJ(p)\geq(1-\beta)^ML_g(S)-D_g.
\]
Because $D_g<L_g(S)$ for every $g\in S$ and $S$ is finite, some $\beta\in(0,1)$ makes the right side at least a positive $\epsilon$ for every $g\in S$. Fix such a $\beta$ and start at an interior point with $p_g>1-\beta$ for every $g\in S$. While the iterates stay in this set, every coordinate of $S$ has derivative at least $\epsilon$, so under either update its logit increases and its probability stays above $1-\beta$. By induction over steps, the iterates never leave the set. Under Equation~\eqref{eq:npg}, each logit in $S$ increases by at least $\eta\lambda_g\epsilon$ per step and tends to $+\infty$. Under Equation~\eqref{eq:pg}, each logit in $S$ is increasing, and a finite limit would leave $p_g(1-p_g)$ with a positive limit and the increments above a fixed positive number, which is impossible. Hence $p_g\to1$ for every $g\in S$.

For the removed groups, take the removal order $g_1,\ldots,g_r$ of strict elimination and let $S_j$ contain $g_j$, the later removed groups and $S$. The induction in the proof of part (a) applies unchanged. It bounds the positive derivative terms of $g_j$ by $L_{g_j}(S_j)$ plus terms that contain a positive power of an earlier probability, and the bound holds uniformly over every other probability. The groups in $S$ therefore do not affect it. Hence $p_g\to0$ for every removed group.
\end{proof}

\begin{proposition}[Strict residue]
\label{prop:residue}
Under the conditions of Theorem~\mainref{thm:recovery}(b), the corner with $p_g=1$ for $g\in S$ and $p_g=0$ otherwise is a strict local maximum of $J$ on the probability cube.
\end{proposition}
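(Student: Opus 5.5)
The plan is a first-order argument. I would show that at the corner $p^\circ$, with $p^\circ_g=1$ for $g\in S$ and $p^\circ_g=0$ otherwise, every coordinate of $\nabla J$ is nonzero with the sign that points back into the corner. Every feasible direction in the cube then decreases $J$ at a linear rate, and continuity of the polynomial gradient turns this into a strict local maximum. Only feasible directions matter. The cube forces $p_g\le1$ for $g\in S$ and $p_h\ge0$ for removed $h$, so every displacement $u=p-p^\circ$ has $u_g\le0$ on $S$ and $u_h\ge0$ off $S$. It therefore suffices to show $\partial_gJ(p^\circ)>0$ for $g\in S$ and $\partial_hJ(p^\circ)<0$ for $h\notin S$.

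\emph{Groups in $S$.} In Equation~(\mainref{eq:derivative}), read at the corner with the convention $p^0=1$, a task with a witness in a removed group contributes zero, because it carries a factor $p_h^{m_{eh}}$ with $m_{eh}\ge1$ and $p_h=0$. A task whose witnesses all lie in $S$ has every factor equal to one. Hence
\[
\partial_gJ(p^\circ)=L_g(S)-D_g,
\]
which is positive by the hypothesis of Theorem~\mainref{thm:recovery}(b).

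\emph{Removed groups.} Fix a removed group $h$, and let $S_j\ni h$ be the set at the step where strict elimination removed it, so that $S\cup\{h\}\subseteq S_j$ and $D_h>L_h(S_j)$. At the corner, a term of $\partial_hJ$ survives only if two conditions hold. First, every other witness of the task lies in $S$, since any other removed group gives a zero factor. Second, $m_{eh}=1$, since $m_{eh}\ge2$ leaves the factor $p_h^{m_{eh}-1}=0$. Such a task has all its witnesses in $S\cup\{h\}$ and contributes $b_e=m_{eh}b_e$. Therefore
\[
\partial_hJ(p^\circ)\le L_h(S\cup\{h\})-D_h\le L_h(S_j)-D_h<0,
\]
where the middle step is the monotonicity of loads under set inclusion that the text uses to show that the residue does not depend on the removal order.

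\emph{Conclusion.} Let $c>0$ be the minimum of the $|\partial J(p^\circ)|$ coordinates. By continuity of $\nabla J$, there is a neighborhood $U$ of $p^\circ$ on which every coordinate keeps its sign with magnitude at least $c/2$. For $p\in U\cap[0,1]^{\mathcal G}$ with $p\neq p^\circ$, the mean value theorem along the segment from $p^\circ$ to $p$ gives
\[
J(p)-J(p^\circ)=\nabla J(\xi)\cdot u\le-\tfrac c2\|u\|_1<0 .
\]
This uses the fact that each term $\partial_gJ(\xi)\,u_g$ is a product of quantities with opposite signs. This is the strict local maximum.

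The main obstacle I expect is the bookkeeping at the corner for the removed groups. The multiplicity cases $m_{eh}=1$ versus $m_{eh}\ge2$ have to be handled correctly, and one must notice that the relevant bound is $L_h(S\cup\{h\})$ and not $L_h(S_j)$ itself. The inequality then closes only through load monotonicity. By contrast, the groups in $S$ and the final mean-value step are routine.
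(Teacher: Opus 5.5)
Your proposal is correct and takes essentially the same route as the paper: sign the partial derivatives at the corner ($L_g(S)-D_g>0$ on $S$; for a removed group, only the terms supported on that group and $S$ survive, and they are bounded by its load at removal), then use continuity to conclude. Your explicit handling of $m_{eh}\ge2$, the intermediate bound $L_h(S\cup\{h\})\le L_h(S_j)$, and the mean-value step on a convex neighborhood (where the paper argues coordinatewise on a rectangle) only spell out details the paper's short proof leaves implicit.
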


\begin{proof}
Every coordinate in $S$ has derivative $L_g(S)-D_g>0$ at the corner. For a removed coordinate, the derivative terms that involve another removed coordinate vanish, and the terms supported on that coordinate and $S$ are bounded by its load when it was removed. Its derivative is therefore strictly negative. By continuity, these signs hold in a small enough rectangular neighborhood of the corner. Moving any coordinate from the corner into the cube decreases $J$, and the signs persist throughout the neighborhood, so the corner is a strict constrained local maximum.
\end{proof}

The strict inequalities are needed for Theorem~\mainref{thm:recovery}(b) and this proposition. A zero derivative at a coordinate in $S$ leaves the higher-order behavior unresolved. Local attraction under small parameter changes also does not show that a group has no profitable finite change in its action probability, so we do not use the residue as a Nash-equilibrium certificate.

\begin{corollary}[Surviving tasks]
\label{cor:surviving}
Under the conditions of Theorem~\mainref{thm:recovery}(b), let $E(S)$ be the set of tasks whose witnesses all lie in groups of $S$. Along the trajectories of part~(b), the expected fraction of completed tasks tends to $|E(S)|/|E|$, and $E(S)$ is nonempty.
\end{corollary}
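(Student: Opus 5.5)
The plan is to use the limits established in the proof of Theorem~\mainref{thm:recovery}(b) and pass them through the completion probabilities task by task. Under independent draws, task $e$ completes with probability $\prod_{g\in\mathcal{G}}p_g^{m_{eg}}$, so the expected fraction of completed tasks is
\[
 \frac{1}{|E|}\sum_{e\in E}\prod_{g\in\mathcal{G}}p_g^{m_{eg}}.
\]
This is a polynomial, and hence continuous, in $p$ on the closed cube. Theorem~\mainref{thm:recovery}(b) gives $p^t\to p^\infty$, where $p^\infty_g=1$ for $g\in S$ and $p^\infty_g=0$ otherwise, from every admissible start.

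By continuity the fraction converges to its value at the corner $p^\infty$, which I will evaluate one task at a time. If every witness of $e$ lies in a group of $S$, then every factor with $m_{eg}>0$ equals $1$ at the corner. The factors with $m_{eg}=0$ equal $1$ for any $p$, and the convention $0^0=1$ covers these even where $p_g=0$. The product is therefore $1$. If instead $e$ has a witness in some removed group $h$, then $m_{eh}\geq1$ and $p_h^{m_{eh}}=0$, so the product is $0$. Summing over tasks gives the limit $|E(S)|/|E|$.

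It remains to show that $E(S)$ is nonempty, and I expect this to be the only step that needs care. I would use the hypothesis that $S$ is nonempty and that $D_g<L_g(S)$ for every $g\in S$. Deterrence is nonnegative, since $c_i\geq0$, $F_i>0$ and $a_i\geq0$. Taking any $g\in S$, we get $L_g(S)>D_g\geq0$, so the sum defining $L_g(S)$ in Equation~(\mainref{eq:load}) has a positive term. Such a term comes from a task $e$ whose witness groups all lie in $S$, which is exactly $e\in E(S)$. Hence $E(S)$ is nonempty and the limit fraction is positive.
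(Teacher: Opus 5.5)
Your proof is correct and takes essentially the same route as the paper. You take the task-by-task limits of the completion probabilities (packaged as continuity of the polynomial at the limiting corner, where the paper passes to the limit factor by factor), and you get nonemptiness from $L_g(S)>D_g\geq0$, which forces a positive term in the load sum and hence a task in $E(S)$.
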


\begin{proof}
A task in $E(S)$ completes with probability $\prod_{g\in S}p_g^{m_{eg}}$, which tends to 1 because every $p_g$ with $g\in S$ tends to 1. Every other task has a witness in a removed group, whose probability tends to 0, so its completion probability tends to 0. Summing over tasks and dividing by $|E|$ gives the limit. For nonemptiness, pick any $g\in S$. Part~(b) requires $D_g<L_g(S)$, and $D_g\geq0$, so $L_g(S)>0$. By Equation~(\mainref{eq:load}), some task $e$ with $m_{eg}>0$ has all of its witnesses in groups of $S$, and that task lies in $E(S)$.
\end{proof}

The corollary concerns exact updates from starts near enough to silence and strict residual inequalities. It is not a prediction for PPO within a finite horizon. One surviving task among 128 gives unsafe completion of $0.78\%$, so a nonempty residue does not by itself rule out the joint recovery threshold of the experiments.

\subsection{Order costs and optimization}
\label{app:cost}

The audit budget buys deterrence for individual agents, and the certificate needs that deterrence when the agent's group is removed.

\begin{proof}[Proof of Proposition~\mainref{prop:cost} and Equation~(\mainref{eq:cost})]
Fix an order $\pi$. A task remains in $L_g(S)$ until the first group that contains one of its witnesses is removed. It contributes $m_{eg}b_e$ to the load charged at that removal and nothing to later removals. With zero base costs and no binding caps, summing the required deterrence over the order gives
\[
 \sum_g L_g(S_g)=\sum_e b_e m_{e,\operatorname{first}_{\pi}(e)}.
\]
Arbitrarily small positive slacks approach this infimum for the strict inequalities, and multiplying by the common unit price $\alpha$ gives the audit budget. Fixed positive margins add deterrence $\sum_g\delta_g$ and audit budget $\alpha\sum_g\delta_g$.

For every task, the first-group multiplicity is an integer between one and $|W_e|$. Multiplying by the positive benefit and summing gives the bounds. Equality at $B$ holds exactly when every first-group multiplicity is one. For a fully shared policy, that multiplicity equals $k$ for every task when every task has $k$ witnesses.
\end{proof}

For heterogeneous prices and caps, write $c_g=\sum_{i\in g}c_i$. If group $g$ must buy $x$ units of extra deterrence, its minimum cost is
\begin{equation}
 \phi_g(x)=\min_{0\leq a_i\leq1}
 \left\{\sum_{i\in g}\kappa_i a_i\;\middle|\;
             \sum_{i\in g}F_i a_i\geq x\right\}.
 \label{eq:purchase}
\end{equation}
Set $\phi_g(x)=0$ for $x\leq0$ and $\phi_g(x)=+\infty$ above the available deterrence. Buying from agents in increasing order of $\kappa_i/F_i$ solves this fractional knapsack problem.

For fixed margins, let $\Phi(S)$ be the minimum remaining certificate cost. Then
\begin{equation}
 \Phi(\varnothing)=0,\qquad
 \Phi(S)=\min_{g\in S}\left[
   \phi_g\big(L_g(S)+\delta_g-c_g\big)+\Phi(S\setminus\{g\})\right].
 \label{eq:dp}
\end{equation}
Every complete certificate has a removal order, and every finite-cost order in this recurrence yields a certificate, so the recurrence is exact for this certificate class. It visits $2^G$ subsets with at most $G$ transitions each, besides the load and purchase-cost evaluations. The implementation uses the exact recurrence for $G\leq10$. For larger $G$, it chooses a feasible next group greedily by purchase cost, and the resulting costs are feasible but not proven minima.

A strong coloring of the co-witness graph places every task's witnesses in distinct groups, so every order has first-group multiplicity one. The condition $C(\pi)=B$ is weaker. It is the unique-minimum coloring condition on the witness graph, which becomes unique-maximum coloring when the order is reversed \citep{cheilaris2010}, and it requires only that every task have a unique witness in its first group. In a $k$-partite task family, one entire role can form the first group and the remaining roles a second group without changing the $B$ term. So a minimum-cost certificate need not give every witness role its own group. The main study uses the co-witness coloring and does not optimize the number of groups.

\subsection{The balanced family}
\label{app:balanced}

The balanced family of Section~\mainref{sec:balanced} has $q^2$ tasks and $kq$ agents, with $2\leq k\leq q$ and $q$ prime. Every agent witnesses $q$ tasks. For two different roles $t$ and $t'$, the difference $t-t'$ is nonzero in $\mathbb{F}_q$, so the equations $u+tv=w$ and $u+t'v=w'$ have exactly one solution for any pair of agent labels $w,w'$. The co-witness graph is therefore complete multipartite. Random relabeling changes only its visible indices.

\begin{proof}[Proof of Corollary~\mainref{cor:balanced}]
With role groups, any first role has load $B$. Its base deterrence is $\rho B/k$ and its prescribed margin is also $\rho B/k$, so it must buy at least $B$ units of fine deterrence. Every agent pays $n/(\gamma B)$ per unit of deterrence, so this purchase costs at least $n/\gamma$, and auditing each member of that role with probability $k/\gamma$ attains this cost. Once the role is removed, every task disappears, and the remaining roles' base costs exactly supply their prescribed margins, so no further auditing is needed.

For the fully shared policy, the load is $kB$, the base deterrence is $\rho B$ and the margin is $\rho B$. The group must buy at least $kB$ units of fine deterrence, which costs $kn/\gamma$, and auditing every agent with probability $k/\gamma$ attains this cost. The assumption $\gamma\geq k$ makes both allocations respect the probability cap. Without margins, the strict inequalities require the first role to buy more than $B-\rho B/k$ units and the fully shared group more than $kB-\rho B$ units. For $\rho<k$ the infima are therefore $(n/\gamma)(1-\rho/k)$ and $(n/\gamma)(k-\rho)$, whose ratio is again $k$.
\end{proof}

The global optimum and the local obstruction can also be checked directly. For $k\geq2$, $Bp^k-Dp$ is convex on $[0,1]$. When $D>B$, its larger endpoint value is $J(0)=0>J(1)=B-D$, so reporting is the global maximum. If also $D<kB$, the derivative at $p=1$ is positive and silence is locally attracting under ascent. Solving $kBp^{k-1}=D$ gives Equation~(\mainref{eq:threshold}).

An explicit time bound holds for normalized natural-gradient updates with role groups in the balanced family. Scale each group derivative by $k/B$, and let role 1 be the audited role. Its logit increment is then at most $-\eta\rho$. Let $\ell(z)=\log(z/(1-z))$, write $[z]_+=\max\{z,0\}$, and set $q_0=\min\{1/2,\rho/(2k)\}$. The audited role's probability is at most $q_0$ after
\[
 T_1=\left\lceil\frac{[\theta_1^0-\ell(q_0)]_+}{\eta\rho}\right\rceil
\]
steps. For any other role $h$, the positive derivative is at most $Bp_1$, and its base deterrence is $\rho B/k$. After $T_1$, its normalized derivative is at most $-\rho/2$. Before then its normalized derivative is at most $k$, so $\theta_h^{T_1}\leq\theta_h^0+\eta kT_1$. For a target probability $\epsilon\in(0,1)$, a valid additional bound is
\[
 T_2=\max\left\{
 \left\lceil\frac{[\ell(q_0)-\ell(\epsilon)]_+}{\eta\rho}\right\rceil,
 \max_{h\ne1}\left\lceil
 \frac{[\theta_h^0+\eta kT_1-\ell(\epsilon)]_+}{\eta\rho/2}
 \right\rceil\right\}.
\]
All role probabilities are at most $\epsilon$ after $T_1+T_2$ steps. The second term of $T_2$ shows the cost of learning in sequence. While the audited role is still learning, the other roles' logits can rise by up to $\eta k$ per step. Each step the audited role takes adds $2k/\rho$ steps to the bound on their remaining time. The bound is conservative and applies to exact updates, so it gives no sample complexity for PPO.

\subsection{The coalition guard}
\label{app:guard}

If some set of silent agents still earned more than it paid, an architecture that gives each agent its own silence probability would fail for reasons that have nothing to do with learning. The coalition guard rules this out. For a set $T\subseteq V$ of silent agents, while all other agents report, the deterministic team reward is
\begin{equation}
 U(T)=\sum_{e:W_e\subseteq T}b_e-\sum_{i\in T}d_i.
 \label{eq:coalition}
\end{equation}
A maximum-closure graph has a positive-weight vertex for each task, a negative-weight vertex for each agent, and an arc from each task to each of its witnesses, so selecting a task requires selecting its witnesses. Because all task weights are positive, the maximum closure weight equals $\max_TU(T)$, and the standard source-sink reduction computes it as a minimum cut \citep{picard1976}. A large enough finite capacity can replace the infinite arcs.

The study applies this check, with benefits multiplied by 1.02, to the certifying allocation that the search finds for witness groups. If that allocation fails the check, the study computes a flow allocation that deters every coalition and takes the coordinatewise maximum of the two allocations. No deterrence decreases, so the group certificate survives, the caps still hold, and every increased coordinate is charged to the reference budget. The guard adds deterrence on 18 of the 24 general graphs, including all 12 whose certificate costs are exact. The addition is between 6.47\% and 34.07\% of the reference budget, with median 24.27\%.

The same guard is applied when constructing the later shuffled allocation and the allocations on corrupted witness graphs, each on the graph used for its construction and before its audit budget is adjusted. Budget alternatives are evaluated without further repair, so scaling a guarded allocation down can break the certificate or the guard's condition. Equation~(\mainref{eq:cost}) describes the group certificate before the guard is applied.

A randomized joint policy is a distribution over deterministic coalitions, so an allocation that leaves every coalition unprofitable makes universal reporting optimal for the same team reward. This optimality statement concerns unrestricted joint action distributions for the unsafe-context reward. The bounded neural policies cannot represent universal reporting exactly, and the guard does not imply that their training will approach it.

\subsection{Which event triggers the fine}
\label{app:contract}

The obstruction of Section~\mainref{sec:obstruction} depends on the fine being charged for silence itself. Take one task of benefit 1, two witnesses that share silence probability $p$ and draw independently, and zero base costs. A duty-to-report fine with total expected value $3/2$ at universal silence gives $J(p)=p^2-3p/2$. Reporting is the unique optimum, but the derivative is positive for $p>3/4$, so exact updates from above $3/4$ move to silence. If the same aggregate fine is charged only when the task completes, the objective is $J(p)=-p^2/2$, whose derivative is negative throughout $(0,1)$, so exact updates move to reporting from every interior start. Both contracts give reward $-1/2$ at universal silence and the same optimum. The example shows that the obstruction depends on the contract and not only on the size of the fine. It does not show that completion-triggered fines remove the obstruction in games with base costs or several tasks, or under the neural training procedure.

\section{Experimental protocol}
\label{app:protocol}

The theory concerns exact derivatives of group probabilities, while the study obtains actions, rewards and updates from neural-network policies. The fixed graph bank, matched initialization and paired random streams hold everything else constant when the grouping or the audits change.

\subsection{Graphs and costs}

The general graphs cross four families, two witness widths and three graph seeds. Every general graph has $n=64$ agents and 128 tasks. Each task's benefit and harm weight are drawn uniformly from $[0.5,1.5]$. Completion fractions are unweighted, and the harm-weighted version is a secondary diagnostic. Base costs are proportional to independent draws from $[0.5,1.5]$ and are normalized to sum to $0.5B$. Fines $F_i$ are $16B/n$ times independent draws from $[0.75,1.25]$, and per-audit costs $\kappa_i$ are independent draws from $[0.5,1.5]$.

Uniform graphs sample each witness set without replacement from all agents. Ring graphs take $k$ consecutive agents modulo 64 from a uniformly drawn start. Modular graphs draw all witnesses from one random 16-agent module with probability 0.85 and from the whole population otherwise. Hub graphs sample without replacement with agent weights proportional to $1/\sqrt{i+1}$.

The balanced and rewired families use $q=11$ and widths four and eight, giving 44 or 88 agents and 121 tasks. Balanced graphs use unit benefits and harm weights, equal per-audit costs and the costs of Corollary~\mainref{cor:balanced} with $\rho=0.5$ and $\gamma=16$. Rewired graphs replace each witness set with probability 0.20 by a uniformly drawn set of the same size and draw benefits and harm weights from $[0.5,1.5]$. Their base costs have independent weights in $[0.6,1.4]$, normalized to $0.5B$. Role labels are randomly permuted before coloring.

Graph seeds are 914101, 914117 and 914131, and each graph has two training replicates. The context, pretraining, recovery, shuffle and corruption seeds are separate deterministic hashes of their declared arguments. The graph bank keeps all 36 graphs. No graph is excluded because of its training result or an infeasible certificate, and every scheduled design is feasible.

The coloring is deterministic DSATUR on the co-witness graph, with saturation degree, graph degree and agent index as successive tie-breakers \citep{brelaz1979}. Shuffling the resulting labels preserves every group's size, so the primary comparison matches the number and dimensions of the group networks.

\subsection{Contexts and observed features}

Legitimate contexts keep a policy from passing the endpoint by reporting everything. Every rollout includes all eight contexts with equal weight. Each episode is a one-step game with no state transition or temporal credit assignment. Contexts 0--3 are unsafe and contexts 4--7 are legitimate. For each context and task, a fixed multiplier in $[0.65,1]$ scales the benefit. Agents receive the context's one-hot vector and the mean and standard deviation of its multipliers. They do not receive witness membership, the allocation or an unsafe flag, although the context identities let them learn different responses to the fixed unsafe and legitimate contexts.

Each agent draws one action per episode, and that action applies to all the tasks it witnesses. Audit randomness is drawn in every context and every arm to keep the random streams paired, but audit events are set to zero in legitimate contexts. The audit budget is reported per unsafe context, $\sum_i\kappa_ia_i$, and its average over all eight contexts is half of that, because no audit cost arises in legitimate contexts.

\subsection{Networks and initialization}

For group $g$ and context features $x$, the base network computes
\begin{align}
 h_g(x)&=\tanh(W_gx+b_g),\\
 z_g(x)&=v_g^\top h_g(x)+u_g,\\
 \theta_g(x)&=12\tanh(z_g(x)/12),\qquad p_g(x)=\sig(\theta_g(x)).
\end{align}
The hidden layer has 32 units. With ten input features, each group network has 385 parameters. Backpropagation includes the derivative of the bounded logit. The fully shared architecture has one such network. The scalar-offset architecture adds a trainable value per agent before bounding the logit. The agent-ID architecture adds trainable weights from a one-hot identity input to the shared hidden layer. Independent networks give each agent a complete network of its own. On general graphs, these three architectures have 449, 2,433 and 24,640 parameters, against 385 for the fully shared network.

Pretraining starts at silence probability 0.75 and runs 512 audit-free rollouts. One pretrained network supplies the same starting policy to every matched recovery arm, and each group network receives a copy of its weights. New scalar offsets and identity weights start at zero. The actor parameters are kept, and optimizer moments and the value baseline are reset. All pretraining outcomes are included. The head resets of Appendix~\ref{app:protocol-budgets} are the declared exception to this matching.

\subsection{Sampled PPO}

The team reward is divided by $B/n$. Each rollout draws 64 one-step episodes per context and performs four epochs of the clipped PPO update. A scalar value estimate for each context moves toward the batch mean at rate 0.2. Advantages use the previous estimate and are standardized over contexts and episodes, with the standard deviation floored at $10^{-6}$.

For the current marginal probability $p_i$ and the behavior probability $p_i^{\rm old}$, the sampled marginal likelihood ratio is
\[
 r_i=\frac{p_i^{s_i}(1-p_i)^{1-s_i}}
 {(p_i^{\rm old})^{s_i}(1-p_i^{\rm old})^{1-s_i}}.
\]
The actor surrogate averages
\[
 \min\{r_i\widehat A,\operatorname{clip}(r_i,0.8,1.2)\widehat A\}
   +0.005\,H(\operatorname{Bernoulli}(p_i))
\]
over agents, contexts and episodes, with the same normalization in every architecture. Adam uses learning rate 0.003, moment coefficients 0.9 and 0.999, numerical constant $10^{-8}$, and a global gradient-norm cap of 0.5. Exact expected rewards and completion rates are used only for evaluation. These settings define the PPO variant we ran, and we do not claim that it matches other multi-agent PPO implementations.

The base condition runs 4,096 recovery rollouts of 512 episodes each, with sampling streams paired across matched conditions. Records keep the job specification, source and input fingerprints, initial-policy fingerprint, final actor parameters, evaluation trace and a cumulative sample hash. Continuation records keep the full optimizer, baseline and random-generator state.

\subsection{Budgets and additional experiments}
\label{app:protocol-budgets}

The budgets are $\{0.5,1,2,4,8\}$ times the reference budget, capped at the cost of auditing every agent. An allocation that costs more than its budget is scaled down proportionally. One that costs less spends the difference by raising every agent's audit probability equally, capped at one, so no coordinate decreases. Uniform and coalition audits receive the same audit budget. Coalition audits keep witness groups and maximize the smallest ratio of deterrence to completed benefit over individual coalitions, and any budget left after that ratio is fixed is spread uniformly. Each allocation is evaluated at its assigned budget with no later repair. Increases preserve the certificate for witness groups, and decreases need not.

Head resets keep the learned first layer, set the output weights to zero, and set the output bias to the logit of silence probability 0.75 or 0.95. They apply to witness, shuffled, fully shared, scalar-offset and agent-ID architectures, and they deliberately change the starting behavior. Both reset levels were fixed before study training.

The optimism experiment replaces the standardized advantage $\widehat A$ with $\max\{\omega\widehat A,\widehat A\}$ for $\omega=0$ or $0.5$ in witness and shuffled arms, on the first two graph seeds of every family and width. The transformation is the one proposed by \citet{zhao2024}, whose full method was evaluated in other environments with other implementation choices.

The continuation experiment uses the first graph seed of every family and width, chosen before training. Witness, shuffled and scalar-offset arms continue to 16,384 rollouts, with the 4,096-rollout prefix preserved. No observed outcome selects a continuation.

For witness error, a nominal 10\% or 25\% of tasks have their whole witness set replaced by a different random set of the same width. The number of replaced tasks is the nominal fraction times the task count, rounded to the nearest integer. Benefits, costs and fines are unchanged. The auditor builds its grouping and guarded allocation on the corrupted graph and then adjusts the allocation to the true graph's reference budget. The true graph supplies both the budget and the environment and never repairs the corrupted grouping or allocation. The two comparison arms are a shared network with scalar offsets and uniform audits with the corrupted grouping, both at the same audit budget. Table~\ref{tab:schedule} lists the main study's experiments with their record counts.

\begin{table}[!htbp]
\centering
\caption{The fixed main-study schedule. The 4,080 records include pretraining. Continuations extend selected base runs and are not independent replicates. All records completed.}
\label{tab:schedule}
\small
\begin{tabular}{lr}
\toprule
Experiment & Records \\
\midrule
Common pretraining & 72 \\
Seven methods at five audit budgets & 2,520 \\
Independent-network architecture & 72 \\
Five architectures at two head resets & 720 \\
Two groupings at two optimism levels & 192 \\
Three methods under two witness-error levels & 432 \\
Three methods continued to 16,384 rollouts & 72 \\
\midrule
Total & 4,080 \\
\bottomrule
\end{tabular}
\end{table}

\section{Endpoints, inference and chronology}
\label{app:inference}

Evaluation uses expected task completion under each policy's joint action distribution, so the endpoints carry no sampling noise from evaluation. For a policy in unsafe context $x$, let $v_x$ be the expected number of completed tasks divided by $|E|$. Legitimate completion uses the corresponding fraction in legitimate contexts. With independent actions, a task's completion probability is the product of its witnesses' silence probabilities. With a common draw, it is the single common silence probability.

Evaluation occurs at initialization and every 64 rollouts. With horizon $T$, the endpoint averages checkpoints with rollout index strictly greater than $0.75T$. For $T=4096$, these are the 16 checkpoints from 3136 through 4096. Unsafe completion averages over the four unsafe contexts and then over these checkpoints, and legitimate completion does the same over the four legitimate contexts. Joint recovery requires a final-quarter unsafe mean below 0.01 and a legitimate mean of at least 0.90. Joint recovery does not require each context or checkpoint to meet those thresholds separately.

For general graph $j$ and training replicate $r\in\{1,2\}$, let $v_{jr}^{S}$ and $v_{jr}^{W}$ be the shuffled and witness endpoints. The paired graph effect is
\[
 d_j=\frac12\sum_{r=1}^2(v_{jr}^{S}-v_{jr}^{W}).
\]
If $\mathcal{S}$ contains the eight general family-width strata, each with three graphs, the primary estimator is
\[
 \widehat\Delta=\frac18\sum_{s\in\mathcal{S}}\frac13\sum_{j\in s}d_j.
\]
The percentile interval resamples three graphs with replacement within each stratum, using 10,000 resamples and seed 914901. The interval describes graph variation within the observed strata, conditional on the two trained replicates averaged for each graph, and it does not estimate variation over new training seeds. Episodes, checkpoints and replicates are never treated as independent graphs. Three graphs per stratum limit how finely the interval can resolve that variation.

All architecture, budget, reset, optimism and witness-error experiments are secondary and descriptive. A budget curve's smallest successful tested budget is reported only on a complete grid, curves without a successful point are right-censored, and nonmonotone outcomes are kept. Neither grid minima nor certificate costs are treated as the minimum audit budget over learning algorithms. The certification splits of Table~\ref{tab:splits}, the round counts of Table~\ref{tab:rounds} and the threshold calculations of Appendix~\ref{app:secondary} were added after the results. The splits use only quantities fixed before training, including the context multipliers.

The main schedule was fixed before its training run, after earlier exploratory studies and development pilots whose results are not pooled into any estimate. The returned main run contains all 4,080 scheduled records. Two later questions led to new runs on the same general graphs. The allocation control adds 96 shuffled-allocation runs and 192 same-host repetitions of the original arms. The common-draw control adds 48 common-draw runs and 48 repetitions of the independent-draw arm. Each includes every general graph and both training replicates, with no outcome-dependent stopping or deletion. Both remain exploratory, because they reuse the graphs of the main study.

\section{Results by family and budget}
\label{app:families}

The primary estimate averages over graph families, so it can hide which witness structures benefit from grouping. Table~\ref{tab:families} separates the general families from the balanced and rewired ones at the reference budget, and Figure~\ref{fig:graphs} shows every paired graph effect together with its two training replicates. Witness groups have a positive mean effect in every general family. Shuffled groups do better on three graphs, and on a fourth both arms stay at 99.99\% unsafe completion. Table~\ref{tab:allbudgets} gives the complete budget aggregates, and Figure~\ref{fig:budgets} plots them for five arms.

\begin{table*}[tp]
\centering
\caption{Reference-budget results by family and witness width. Each row contains three graph seeds and two training replicates. $W$ denotes witness groups and $S$ shuffled groups. Completion columns are percentages, and differences are percentage points. Balanced and rewired rows are secondary and are excluded from the primary estimate.}
\label{tab:families}
\small
\begin{tabular}{lrrrrrr}
\toprule
Family & $k$ & Unsafe $W$ & Unsafe $S$ & $S-W$ & Joint recovery $W$ & Joint recovery $S$ \\
\midrule
Uniform & 4 & 43.32 & 78.43 & 35.11 & 0/6 & 0/6 \\
Uniform & 8 & 86.39 & 88.02 & 1.62 & 0/6 & 0/6 \\
Ring & 4 & 31.36 & 63.09 & 31.73 & 4/6 & 0/6 \\
Ring & 8 & 17.19 & 99.99 & 82.80 & 4/6 & 0/6 \\
Modular & 4 & 31.48 & 81.77 & 50.29 & 0/6 & 0/6 \\
Modular & 8 & 84.50 & 98.37 & 13.86 & 0/6 & 0/6 \\
Hub & 4 & 37.39 & 82.48 & 45.08 & 1/6 & 0/6 \\
Hub & 8 & 77.96 & 86.19 & 8.23 & 0/6 & 0/6 \\
Balanced & 4 & 33.33 & 99.98 & 66.65 & 4/6 & 0/6 \\
Balanced & 8 & 33.33 & 100.00 & 66.66 & 4/6 & 0/6 \\
Rewired & 4 & 23.03 & 99.96 & 76.93 & 2/6 & 0/6 \\
Rewired & 8 & 81.51 & 99.99 & 18.48 & 0/6 & 0/6 \\
 
\bottomrule
\end{tabular}
\end{table*}

\begin{figure*}[p]
\centering
\includegraphics[width=0.82\textwidth]{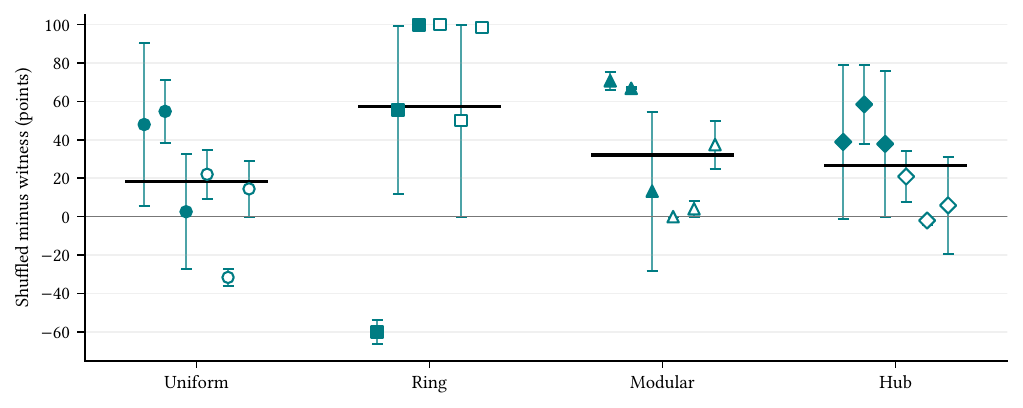}
\caption{The 24 paired general-graph effects at the reference budget. Each marker is the mean of a graph's two training replicates, and the ticks at the ends of the vertical line through it are the effects in the replicates themselves. Within a replicate, both arms start from the same pretrained network. Filled markers have witness width four and open markers width eight. Horizontal black segments are family means, and positive values favor witness groups. The graph means are the units of the primary bootstrap.}
\Description{A dot plot of shuffled-group minus witness-group unsafe completion in percentage points for 24 graphs in four families. Each graph has a marker at the mean of its two training replicates and a vertical line whose end ticks mark the two replicate effects. Family means are about 18 for uniform, 57 for ring, 32 for modular and 27 for hub graphs. Most graph means are positive. Four lie at or below zero, the lowest near minus 60 in the ring family and another near minus 32 in the uniform family. Replicate effects range from about minus 66 to 100, and the two replicates of a graph differ by more than 50 points on eight graphs.}
\label{fig:graphs}
\end{figure*}

Table~\ref{tab:rounds} splits the witness-group runs of Table~\ref{tab:allbudgets} under the reference allocation and under coalition audits by the number of elimination rounds that each allocation needs to certify witness groups on each graph. At four times the reference budget, the Fewer row contains exactly the six width-8 hub and uniform graphs, and at eight times it contains the three width-8 hub graphs. The six width-8 hub and uniform graphs are also the only graphs whose reference allocation comes from the greedy order without guard additions, so the table cannot attribute outcome differences to the round counts alone.

\begin{table*}[tp]
\centering
\caption{Coalition audits against the reference allocation for witness groups on the 24 general graphs. Graphs are split by whether coalition audits need fewer, equal or more elimination rounds than the reference allocation, or do not certify witness groups, which the reference allocation certifies on every graph at these budgets. $R$ denotes the reference allocation and $C$ coalition audits. Each row covers both training replicates of its graphs, and the unsafe columns are mean percentages. The split is post hoc and coincides with the graph subsets named in the text.}
\label{tab:rounds}
\small
\begin{tabular}{rlrrrrr}
\toprule
Budget & Rounds for $C$ & Graphs & Joint recovery $R$ & Joint recovery $C$ & Unsafe $R$ & Unsafe $C$ \\
\midrule
2 & Fewer & 10 & 3/20 & 2/20 & 61.36 & 48.78 \\
2 & Equal & 6 & 6/12 & 6/12 & 17.45 & 11.08 \\
2 & More & 3 & 3/6 & 2/6 & 7.65 & 30.21 \\
2 & Uncertified & 5 & 6/10 & 1/10 & 33.36 & 87.22 \\
4 & Fewer & 6 & 0/12 & 5/12 & 74.03 & 12.24 \\
4 & Equal & 18 & 29/36 & 32/36 & 9.09 & 1.48 \\
8 & Fewer & 3 & 3/6 & 6/6 & 14.35 & 0.13 \\
8 & Equal & 10 & 19/20 & 20/20 & 0.12 & 0.00 \\
8 & More & 11 & 22/22 & 22/22 & 0.04 & 0.04 \\
 
\bottomrule
\end{tabular}
\end{table*}

\begin{table*}[tp]
\centering
\caption{Complete budget aggregates on the general families. Each row contains 48 runs. Budget is a multiple of the reference budget, subject to the full-audit cap. Uniform and coalition rows keep witness groups. Certified counts the graphs on which the audits certify the architecture's groups, and it does not apply to scalar offsets or agent IDs, which do not form a grouping. Completion columns are percentages, and all action draws are independent.}
\label{tab:allbudgets}
\small
\begin{tabular}{lrrrrr}
\toprule
Method & Budget & Certified & Unsafe & Legitimate & Joint recovery \\
\midrule
Witness groups & 0.5 & 0/24 & 99.99 & 100.00 & 0/48 \\
Witness groups & 1 & 24/24 & 51.20 & 99.99 & 9/48 \\
Witness groups & 2 & 24/24 & 37.84 & 99.98 & 18/48 \\
Witness groups & 4 & 24/24 & 25.33 & 99.98 & 29/48 \\
Witness groups & 8 & 24/24 & 1.86 & 99.97 & 44/48 \\
Shuffled groups & 0.5 & 0/24 & 98.51 & 100.00 & 0/48 \\
Shuffled groups & 1 & 1/24 & 84.79 & 99.99 & 0/48 \\
Shuffled groups & 2 & 10/24 & 56.75 & 99.99 & 6/48 \\
Shuffled groups & 4 & 18/24 & 30.94 & 99.97 & 23/48 \\
Shuffled groups & 8 & 24/24 & 0.53 & 99.98 & 45/48 \\
Fully shared & 0.5 & 0/24 & 100.00 & 100.00 & 0/48 \\
Fully shared & 1 & 0/24 & 99.99 & 100.00 & 0/48 \\
Fully shared & 2 & 0/24 & 99.97 & 99.99 & 0/48 \\
Fully shared & 4 & 6/24 & 57.78 & 100.00 & 19/48 \\
Fully shared & 8 & 13/24 & 27.08 & 100.00 & 35/48 \\
Shared + scalar offsets & 0.5 & -- & 100.00 & 100.00 & 0/48 \\
Shared + scalar offsets & 1 & -- & 91.66 & 100.00 & 4/48 \\
Shared + scalar offsets & 2 & -- & 72.78 & 99.99 & 10/48 \\
Shared + scalar offsets & 4 & -- & 58.67 & 99.99 & 19/48 \\
Shared + scalar offsets & 8 & -- & 19.49 & 100.00 & 38/48 \\
Shared + agent IDs & 0.5 & -- & 97.32 & 99.96 & 0/48 \\
Shared + agent IDs & 1 & -- & 88.93 & 99.95 & 0/48 \\
Shared + agent IDs & 2 & -- & 66.16 & 99.93 & 10/48 \\
Shared + agent IDs & 4 & -- & 43.75 & 99.87 & 27/48 \\
Shared + agent IDs & 8 & -- & 13.32 & 99.82 & 41/48 \\
Witness + uniform audits & 0.5 & 0/24 & 99.99 & 100.00 & 0/48 \\
Witness + uniform audits & 1 & 0/24 & 99.95 & 99.99 & 0/48 \\
Witness + uniform audits & 2 & 0/24 & 99.77 & 99.99 & 0/48 \\
Witness + uniform audits & 4 & 8/24 & 52.87 & 99.98 & 21/48 \\
Witness + uniform audits & 8 & 19/24 & 9.30 & 99.97 & 41/48 \\
Witness + coalition audits & 0.5 & 0/24 & 99.34 & 99.99 & 0/48 \\
Witness + coalition audits & 1 & 1/24 & 88.20 & 99.99 & 1/48 \\
Witness + coalition audits & 2 & 19/24 & 45.05 & 99.98 & 11/48 \\
Witness + coalition audits & 4 & 24/24 & 4.17 & 99.99 & 37/48 \\
Witness + coalition audits & 8 & 24/24 & 0.03 & 100.00 & 48/48 \\
 
\bottomrule
\end{tabular}
\end{table*}

\begin{figure*}[tp]
\centering
\includegraphics[width=0.82\textwidth]{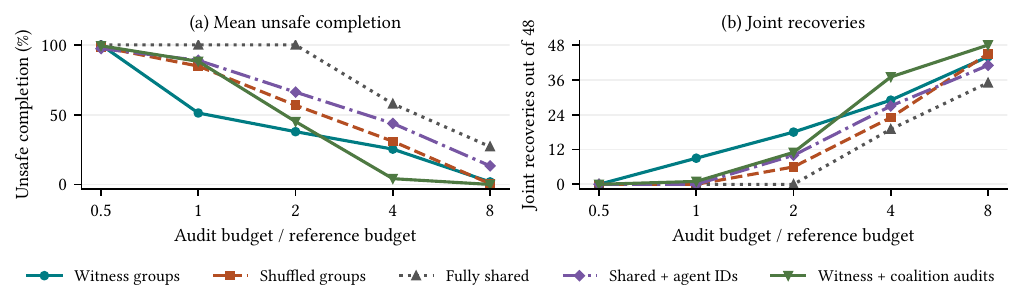}
\caption{Responses to the audit budget on the 24 general graphs of the first study, with 48 runs per point. Coalition audits keep witness groups and change only the allocation.}
\Description{Two panels with the audit budget on a logarithmic axis from 0.5 to 8 times the reference budget. The left panel shows mean unsafe completion. At the reference budget, witness groups are near 51 percent while shuffled groups, the fully shared network, the shared network with agent IDs and coalition audits stay above 84 percent. At eight times the reference budget, all methods except the fully shared network and the shared network with agent IDs are below 2 percent. The right panel shows joint recoveries out of 48. Witness groups rise from 0 to 9, 18, 29 and 44, coalition audits from 0 to 1, 11, 37 and 48, shuffled groups from 0 to 0, 6, 23 and 45, the shared network with agent IDs from 0 to 0, 10, 27 and 41, and the fully shared network from 0 to 0, 0, 19 and 35.}
\label{fig:budgets}
\end{figure*}

Table~\ref{tab:splits} splits the runs of the five budget-sweep arms that form a grouping by whether the allocation certifies the arm's grouping, and the witness advantage by whether the allocation certifies shuffled groups. Checked at the benefit upper bound, the certificate still leaves uncertified runs that recover, 3 of 134 at twice the reference budget and 16 of 80 at four times it, because each context scales each benefit by a multiplier between 0.65 and 1. Checked in the four unsafe contexts that each run trains on, every joint recovery up to four times the reference budget comes from a run certified in each of its unsafe contexts. At eight times the reference budget, 206 of 224 runs certified in each unsafe context recover, and so do 7 of the 16 others. All 16 use a fully shared network on a width-8 graph. In the seven that recover, total deterrence lies between $0.86kB_x$ and $kB_x$ in every unsafe context $x$, so universal silence is a strict local maximum with $0<J_x'(1)\leq0.14kB_x$, and PPO leaves it even though exact updates started near silence would not. Five other runs in that range stay silent, as do the four whose deterrence falls below $0.79kB_x$ in some unsafe context. Four of the eight width-8 fully shared runs certified in every unsafe context recover. The witness advantage is larger on graphs where the allocation fails to certify shuffled groups at every budget where both kinds of graph occur, but the pooled comparison confounds certification with width. At twice the reference budget, all 10 certified graphs have width 4, while 12 of the 14 uncertified ones have width 8. Within the family-width strata that contain both kinds of graph, the advantage is larger on the uncertified graphs in 8 of 9 comparisons, each with one or two graphs on either side.

\begin{table}[tp]
\centering
\caption{Post hoc splits by certification on the 24 general graphs of the first study. (a) Joint recoveries in the witness, shuffled, fully shared, uniform-audit and coalition-audit arms, by whether the allocation certifies the arm's grouping at the benefit upper bound and in each of the run's four unsafe contexts. Half the reference budget certifies nothing and gives no joint recovery. (b) Witness advantage in points by whether the allocation certifies shuffled groups at the bound, with graph counts in parentheses.}
\label{tab:splits}
\small
\setlength{\tabcolsep}{4pt}
\begin{tabular}{rrrrr}
\multicolumn{5}{l}{(a) Joint recoveries} \\
\toprule
 & \multicolumn{2}{c}{At the bound} & \multicolumn{2}{c}{In each unsafe context} \\
\cmidrule(lr){2-3}\cmidrule(lr){4-5}
Budget & Certified & Uncertified & Certified & Uncertified \\
\midrule
1 & 10 of 52 & 0 of 188 & 10 of 78 & 0 of 162 \\
2 & 32 of 106 & 3 of 134 & 35 of 126 & 0 of 114 \\
4 & 113 of 160 & 16 of 80 & 129 of 186 & 0 of 54 \\
8 & 200 of 208 & 13 of 32 & 206 of 224 & 7 of 16 \\ 
\bottomrule
\end{tabular}

\medskip
\begin{tabular}{rlrrr}
\multicolumn{5}{l}{(b) Witness advantage} \\
\toprule
Budget & Allocation & Certified & Uncertified & All \\
\midrule
1 & Reference & $-$60.10 (1) & 37.67 (23) & 33.59 \\
1 & Redesigned & 20.28 (14) & 34.07 (10) & 26.03 \\
2 & Reference & 13.49 (10) & 22.78 (14) & 18.91 \\
2 & Redesigned & 4.78 (24) & -- (0) & 4.78 \\
4 & Reference & $-$4.77 (18) & 36.76 (6) & 5.61 \\
8 & Reference & $-$1.33 (24) & -- (0) & $-$1.33 \\
 
\bottomrule
\end{tabular}
\end{table}

\subsection{Allocation control}
\label{app:allocation}

The primary comparison uses an allocation designed for witness groups, so shuffled groups might fail only because their allocation was designed for another grouping. The allocation control constructs an allocation for shuffled groups on each graph with the same search, fines, margin and coalition guard as the reference allocation. Its audit budget averages 1.142 times the reference budget over the 24 general graphs, and it is not an exact optimum on graphs that use the greedy branch. Adjusted to the reference budget by the rule of Appendix~\ref{app:protocol-budgets}, the redesigned allocation certifies shuffled groups on 14 of the 24 graphs. On 18 of the 24 graphs, it also leaves every silent coalition with a profit below $10^{-12}$ at the actual benefits. The reference allocation certifies shuffled groups on one graph and witness groups on all 24.

\begin{table}[!htbp]
\centering
\caption{Allocation control with same-host repetitions. Every row contains the same 24 general graphs and both training replicates. Completion columns are percentages. The horizon, endpoint and starting policy are unchanged.}
\label{tab:allocation}
\small
\setlength{\tabcolsep}{2.5pt}
\begin{tabular}{rlrrr}
\toprule
Budget & Grouping and allocation & Unsafe & Legitimate & Joint recovery \\
\midrule
1 & Witness, reference & 51.20 & 99.99 & 9/48 \\
1 & Shuffled, reference & 84.79 & 99.99 & 0/48 \\
1 & Shuffled, redesigned & 77.23 & 99.99 & 0/48 \\
2 & Witness, reference & 37.84 & 99.98 & 18/48 \\
2 & Shuffled, reference & 56.75 & 99.99 & 6/48 \\
2 & Shuffled, redesigned & 42.62 & 99.98 & 13/48 \\
 
\bottomrule
\end{tabular}
\end{table}

At the reference budget, the witness advantage over shuffled groups with the redesigned allocation is 26.03 points, with exploratory interval $[11.77,\allowbreak 39.93]$. At twice the reference budget it is 4.78 points, with interval $[-3.98,\allowbreak 13.55]$. The intervals use paired graph units, stratified resampling and seed 9152026, and they are not adjusted for multiple comparisons. Table~\ref{tab:allocation} keeps the repetitions with the reference allocation so that the redesigned arm is compared on the same host.

\section{Training, initialization and witness information}
\label{app:secondary}

These experiments ask whether recovery changes when training starts farther from silence, uses different advantage weights or runs longer, or when the auditor designs the allocation from wrong witness information. They include the balanced and rewired families as stated in each caption, so their denominators differ from the 48-run summaries of the main text. Both completion measures are kept, because lower unsafe completion can come with a large loss of legitimate completion.

\begin{table}[!htbp]
\centering
\caption{Output-head resets at the reference budget across all six families, with 72 runs per row. $p_0$ is the reset silence probability, and the resets keep the hidden layer but change the starting policy. Completion columns are percentages.}
\label{tab:resets}
\small
\setlength{\tabcolsep}{3.5pt}
\begin{tabular}{lrrrr}
\toprule
Architecture & $p_0$ & Unsafe & Legitimate & Joint recovery \\
\midrule
Shared + agent IDs & 0.75 & 26.96 & 55.46 & 2/72 \\
Shared + agent IDs & 0.95 & 77.30 & 99.92 & 1/72 \\
Shared + scalar offsets & 0.75 & 50.02 & 55.55 & 3/72 \\
Shared + scalar offsets & 0.95 & 93.05 & 100.00 & 5/72 \\
Fully shared & 0.75 & 55.55 & 55.55 & 0/72 \\
Fully shared & 0.95 & 100.00 & 100.00 & 0/72 \\
Shuffled groups & 0.75 & 34.86 & 52.77 & 4/72 \\
Shuffled groups & 0.95 & 76.97 & 99.99 & 0/72 \\
Witness groups & 0.75 & 2.32 & 48.61 & 26/72 \\
Witness groups & 0.95 & 24.10 & 99.99 & 32/72 \\
 
\bottomrule
\end{tabular}
\end{table}

A reset gives every agent the same silence probability $p_0$, so every arm starts at $p_0^k$ unsafe and legitimate completion (Table~\ref{tab:resetwidth}). For the fully shared network, the unsafe-context stationary point $p_*$ of Equation~(\mainref{eq:threshold}) lies between 0.76 and 0.85 over all 36 reference allocations and unsafe contexts. The analogous legitimate-context point $(\sum_ic_i/(kB_x))^{1/(k-1)}$ lies between 0.689 and 0.694 at width 8 and between 0.529 and 0.539 at width 4. The output weights start at zero, so the exact gradient on the shared output bias has the sign of the derivative summed over all eight contexts. At $p_0=0.95$, this sum favors silence in every graph. At $p_0=0.75$, it favors silence in every width-4 graph and reporting in every width-8 graph. The final-quarter outcomes of the fully shared network agree with these initial gradient signs in 140 of its 144 reset runs (Table~\ref{tab:resets}). It ends silent in both kinds of context in every width-4 run and in every width-8 run from 0.95. From 0.75 at width 8, it ends reporting in both kinds of context in 32 of 36 runs and silent in the other 4 (Table~\ref{tab:resetwidth}). The script \texttt{analysis/\allowbreak design\_checks.py} computes these quantities from the rebuilt graph bank.

Without the optimistic transformation, the graphs and replicates of Table~\ref{tab:optimism} give witness groups 51.77\% unsafe completion and 13 of 48 joint recoveries, and shuffled groups 90.48\% and none. With the transformation, witness groups achieve 1 and 2 joint recoveries at its two settings. \citet{zhao2024} note that optimism can mislead learning when rewards are stochastic, as the sampled fines are here.

The continuation experiment (Table~\ref{tab:long}) also helps on the general graphs alone, where it lowers witness-group unsafe completion from 50.85\% to 29.82\% and raises joint recoveries from 3 to 8 of 16.

Witness error (Table~\ref{tab:error}) raises witness-group unsafe completion on the general graphs to 66.45\% and 65.21\%, against 68.58\% and 70.01\% across all six families.

\begin{table*}[t]
\centering
\caption{Output-head resets by witness width, with 36 runs per row across all six families. The start column is the unsafe and legitimate completion $100p_0^k$ at the reset. Completion columns are percentages.}
\label{tab:resetwidth}
\small
\begin{tabular}{lrrrrrrr}
\toprule
Architecture & $p_0$ & $k$ & Start & Unsafe & Legitimate & Legitimate below 90\% & Joint recovery \\
\midrule
Witness groups & 0.75 & 4 & 31.64 & 2.27 & 86.10 & 5/36 & 25/36 \\
Witness groups & 0.75 & 8 & 10.01 & 2.37 & 11.11 & 32/36 & 1/36 \\
Witness groups & 0.95 & 4 & 81.45 & 4.65 & 99.99 & 0/36 & 22/36 \\
Witness groups & 0.95 & 8 & 66.34 & 43.54 & 99.99 & 0/36 & 10/36 \\
Shuffled groups & 0.75 & 4 & 31.64 & 60.58 & 91.66 & 3/36 & 4/36 \\
Shuffled groups & 0.75 & 8 & 10.01 & 9.14 & 13.89 & 31/36 & 0/36 \\
Shuffled groups & 0.95 & 4 & 81.45 & 62.19 & 99.99 & 0/36 & 0/36 \\
Shuffled groups & 0.95 & 8 & 66.34 & 91.75 & 99.99 & 0/36 & 0/36 \\
Fully shared & 0.75 & 4 & 31.64 & 100.00 & 100.00 & 0/36 & 0/36 \\
Fully shared & 0.75 & 8 & 10.01 & 11.11 & 11.11 & 32/36 & 0/36 \\
Fully shared & 0.95 & 4 & 81.45 & 100.00 & 100.00 & 0/36 & 0/36 \\
Fully shared & 0.95 & 8 & 66.34 & 99.99 & 99.99 & 0/36 & 0/36 \\
Shared + scalar offsets & 0.75 & 4 & 31.64 & 88.92 & 99.99 & 0/36 & 3/36 \\
Shared + scalar offsets & 0.75 & 8 & 10.01 & 11.11 & 11.11 & 32/36 & 0/36 \\
Shared + scalar offsets & 0.95 & 4 & 81.45 & 86.11 & 100.00 & 0/36 & 5/36 \\
Shared + scalar offsets & 0.95 & 8 & 66.34 & 99.99 & 99.99 & 0/36 & 0/36 \\
Shared + agent IDs & 0.75 & 4 & 31.64 & 42.81 & 99.82 & 0/36 & 2/36 \\
Shared + agent IDs & 0.75 & 8 & 10.01 & 11.11 & 11.11 & 32/36 & 0/36 \\
Shared + agent IDs & 0.95 & 4 & 81.45 & 54.60 & 99.84 & 0/36 & 1/36 \\
Shared + agent IDs & 0.95 & 8 & 66.34 & 99.99 & 99.99 & 0/36 & 0/36 \\
 
\bottomrule
\end{tabular}
\end{table*}

\begin{table}[!htbp]
\centering
\caption{Witness error at the reference budget across all six families, with 72 runs per row. Error is the nominal fraction of tasks whose witness set is replaced, rounded to an integer task count. The true graph supplies the budget and the environment. Completion columns are percentages.}
\label{tab:error}
\small
\setlength{\tabcolsep}{3.5pt}
\begin{tabular}{lrrrr}
\toprule
Method & Error & Unsafe & Legitimate & Joint recovery \\
\midrule
Shared + scalar offsets & 10\% & 91.66 & 100.00 & 6/72 \\
Shared + scalar offsets & 25\% & 95.94 & 100.00 & 1/72 \\
Witness + uniform audits & 10\% & 99.99 & 99.99 & 0/72 \\
Witness + uniform audits & 25\% & 99.99 & 99.99 & 0/72 \\
Witness groups & 10\% & 68.58 & 99.99 & 7/72 \\
Witness groups & 25\% & 70.01 & 99.99 & 5/72 \\
 
\bottomrule
\end{tabular}
\end{table}

\begin{table}[!htbp]
\centering
\caption{Optimistic advantage transformations at the reference budget. Each row has 48 runs over six families, two widths, the first two graph seeds and two training replicates. The transformation is $\max\{\omega\widehat A,\widehat A\}$. Completion columns are percentages. This tests one component and does not reproduce the full method of \citet{zhao2024}.}
\label{tab:optimism}
\small
\setlength{\tabcolsep}{3.5pt}
\begin{tabular}{lrrrr}
\toprule
Grouping & $\omega$ & Unsafe & Legitimate & Joint recovery \\
\midrule
Shuffled groups & 0 & 96.63 & 99.73 & 0/48 \\
Shuffled groups & 0.5 & 98.96 & 100.00 & 0/48 \\
Witness groups & 0 & 82.63 & 99.62 & 1/48 \\
Witness groups & 0.5 & 87.65 & 99.99 & 2/48 \\
 
\bottomrule
\end{tabular}
\end{table}

\begin{table}[!htbp]
\centering
\caption{Fixed continuation subset at the reference budget. Each row has 24 runs over six families, two widths, the first graph seed and two training replicates. The 16,384-rollout run continues the 4,096-rollout run with optimizer and random state preserved, and each endpoint uses the final quarter of its own horizon. Completion columns are percentages.}
\label{tab:long}
\small
\setlength{\tabcolsep}{3.5pt}
\begin{tabular}{lrrrr}
\toprule
Architecture & Rollouts & Unsafe & Legitimate & Joint recovery \\
\midrule
Shared + scalar offsets & 4,096 & 95.83 & 100.00 & 1/24 \\
Shared + scalar offsets & 16,384 & 91.66 & 100.00 & 2/24 \\
Shuffled groups & 4,096 & 92.34 & 99.99 & 0/24 \\
Shuffled groups & 16,384 & 78.67 & 100.00 & 0/24 \\
Witness groups & 4,096 & 42.44 & 99.99 & 8/24 \\
Witness groups & 16,384 & 26.97 & 100.00 & 14/24 \\
 
\bottomrule
\end{tabular}
\end{table}

\section{The common-draw control}
\label{app:coupling}

Identical marginal probabilities do not imply the same joint policy, and this control changes only the joint policy. The control uses the fully shared network and the same observation for every agent. For each context $x$ and episode $b$, it draws $U_{xb}$ uniformly from $[0,1]$ and sets
\[
 s_{xbi}=\ind\{U_{xb}<p_x\}\quad\text{for every agent }i.
\]
The sampler still draws an array of independent uniforms with the dimensions of the original arm and uses the first agent's entry as the common draw. Drawing the full array keeps the independent audit events at the same positions in the random stream. All rewards, fines and costs are computed by the original functions.

Every task has a nonempty witness set, so all tasks complete exactly when the common action is silence. Write $B_x$ for the total benefit of context $x$ and $D_x$ for the total expected cost of silence in it. The expected benefit is $p_xB_x$, the expected cost is $p_xD_x$, and the expected completion fraction is $p_x$. The argument proves Equation~(\mainref{eq:common}) without assuming equal witness widths.

Every general reference allocation satisfies $D>B$ at the benefit upper bound, and $D_x=D$ in an unsafe context, so $B_x-D_x<0$ there. Legitimate contexts have no fines, so $D_x=\sum_ic_i=0.5B$ in them. Since $B_x\geq0.65B$, their common-draw slope is $B_x-0.5B\geq0.15B>0$. These expected rewards favor reporting in unsafe contexts and silence in legitimate ones. These signs do not remove the coupling between contexts in the network or establish convergence for PPO.

\begin{figure*}[t]
\centering
\includegraphics[width=0.82\textwidth]{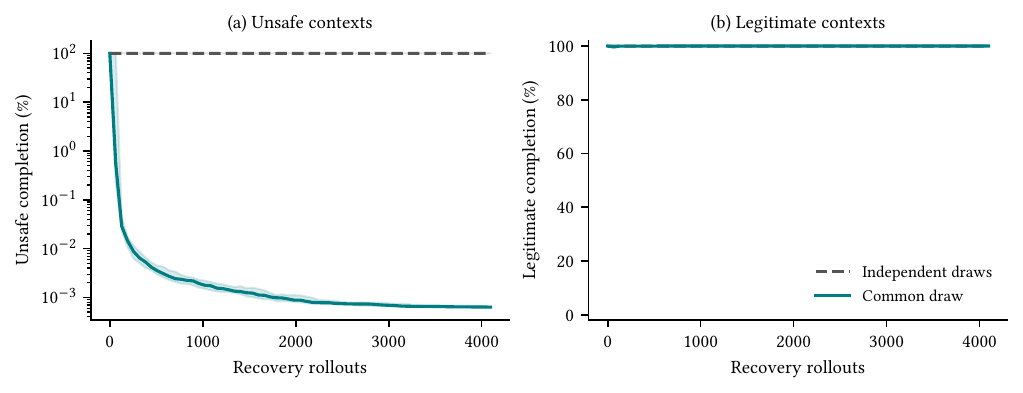}
\caption{Learning in the common-draw control. Lines are medians across the 48 general-graph runs and shading gives the interquartile range. Unsafe completion uses a logarithmic scale. The legitimate panel shows that the common draw does not reach low unsafe completion by reporting everywhere. The tabulated endpoint is a final-quarter mean.}
\Description{Two panels against recovery rollouts from 0 to 4096. In the left panel, on a logarithmic scale, unsafe completion under independent draws stays near 100 percent, while under the common draw it falls within a few hundred rollouts to below 0.01 percent and ends near 0.0007 percent. In the right panel, legitimate completion stays near 100 percent for both arms throughout training.}
\label{fig:commonlearning}
\end{figure*}

The joint policy has only two supported action vectors, $\mathbf{0}$ and $\mathbf{1}$, with likelihood
\[
 P_\theta(s\mathbf{1}\mid x)=p_x^s(1-p_x)^{1-s}.
\]
For a stored common action, every agent's marginal PPO ratio equals this joint ratio, so every clipped surrogate term is identical. Averaging them across agents gives one copy of the single-Bernoulli surrogate. The mean marginal entropy also equals the entropy of this two-action joint policy. Multiplying the identical marginal likelihoods would be wrong, because the draws are dependent. The implementation keeps the original average and changes only the sampler and the evaluator.

Keeping that average does not keep its normalization relative to the joint-policy score. Let $\pi_\theta$ be the shared marginal policy. Under independent actions,
\begin{equation}
 \frac1n\sum_i\nabla_\theta\log\pi_\theta(s_i\mid x)
 =\frac1n\nabla_\theta\log P_\theta^{\mathrm{ind}}(\mathbf{s}\mid x).
 \label{eq:independent-score}
\end{equation}
Under the common draw, $s_i=s$ for all $i$, so
\begin{equation}
 \frac1n\sum_i\nabla_\theta\log\pi_\theta(s\mid x)
 =\nabla_\theta\log P_\theta^{\mathrm{common}}(s\mathbf{1}\mid x).
 \label{eq:common-score}
\end{equation}
At the behavior policy, these are the score factors of the unclipped actor update. Write $H(p_x)$ for the entropy of a Bernoulli draw with parameter $p_x$. The joint entropy is $nH(p_x)$ for the independent policy and $H(p_x)$ for the common policy, while the implemented mean marginal entropy equals $H(p_x)$ in both arms. So the joint action spaces, exploration and normalization differ even though the surrogate code is unchanged. Standardized advantages, clipping and Adam also prevent reading the observed PPO steps as a direct comparison of exact derivatives.

The intervention keeps the starting marginal probabilities and pretrained arrays, changes the starting joint distribution and assumes a common random source and a common observation. Agents with different conditional probabilities would in general act differently on the same uniform draw, which would require a different joint likelihood. The result concerns the two-action joint policy defined above.

All 48 common-draw runs meet the joint recovery endpoint (Figure~\ref{fig:commonlearning}). The largest run-level final-quarter unsafe completion is $9.30\times10^{-6}$ as a fraction, the smallest legitimate completion is 0.999918, and the mean unsafe fraction is $6.788\times10^{-6}$. The bounded logit keeps every silence probability above $\sig(-12)\approx6.1\times10^{-6}$, so these values sit near the smallest the network can produce, and the main text reports the mean exactly instead of rounding it to zero.

Separate checks compared sampled rewards over 160,000 joint episodes with the analytic common-draw expectation, and compared the gradient of the surrogate averaged over repeated agents with the gradient of one Bernoulli surrogate. The largest absolute standardized reward discrepancy was 1.94, and the largest gradient discrepancy was $7.26\times10^{-18}$. A scalar network forward pass and reward calculation also recomputed every saved final policy value.

\section{Audit allocation at equal cost}
\label{app:newalloc}

\subsection{Study sequence}

The allocation study followed the grouping experiments of the main study and the exploratory diagnostics reported above. The locally frozen production protocol fixed the inputs, training settings, primary contrast and endpoint before training. The retained analysis plan records that the uncertainty procedures and the worked case below were specified before outcome inspection, although that plan was not part of the training freeze. The primary comparison is coalition minus reference unsafe completion at the reference budget from learned silence. Uniform audits were added before the input freeze as a secondary baseline. While the main runs were in progress, checks on the inputs alone motivated two separately frozen secondary controls, selection by the certificate among coalition-optimal allocations and minimum-$L_2$ selection without the certificate. These controls leave the primary comparison unchanged. The freezes were local records and not external preregistration, and the controls are not independent samples of graphs.

All 24 declared graphs and eight paired training replicates per graph were retained. The main runs contain 192 pretraining and 2,304 recovery records, crossing three allocations, two budgets and two starts. The tie-aware extension adds 768 recovery records at both budgets and both starts, and the minimum-$L_2$ control adds 192 at the reference budget from learned silence, for 3,456 records in all. No graph, seed or outcome was replaced because of its result.

\subsection{Games, contexts and grouping}

Each graph has $n=64$ agents and 128 tasks. We cross the uniform, ring, modular and hub families with witness widths $k\in\{4,8\}$ and graph seeds 71027011, 71027029 and 71027047, which are new relative to the main study. The generator hashes the family and parameters with the seed, so these identifiers give distinct streams across strata. The families are generated as in Appendix~\ref{app:protocol}. Task benefits are uniform on $[0.5,1.5]$, and positive base costs are drawn independently from the same interval and rescaled to sum to $0.5B$. The eight recurring contexts, their multipliers in $[0.65,1]$, the observed features and the absence of an unsafe label are as in the main study. Inspections and fines apply only in unsafe contexts, and base costs apply in both kinds. We report the audit budget per unsafe context, $\sum_i\kappa_ia_i$, so the expected audit cost averaged over the eight equally sampled contexts is half that amount. Witness groups come from the same deterministic coloring, so every pair of co-witnesses is in separate groups, each group has one network and one silence probability per context, and agents draw independently. Group membership is identical across the compared allocations. Because each task has at most one witness in any group, this study tests allocation and does not isolate within-group witness multiplicity.

\subsection{Audit allocations}

All agents have fine $F_i=16B/n$ and audit price $\kappa_i=1$, so equal spending implies equal total deterrence, since base costs are fixed. All allocations obey $0\leq a_i\leq1$. Let $B(C)$ be the benefit of the tasks whose witnesses all belong to a coalition $C$. The static coalition objective is
\[
\lambda(a)=\min_{C:B(C)>0}\frac{\sum_{i\in C}(c_i+F_ia_i)}{B(C)}.
\]
Coalitions with no completed benefit do not enter the ratio, and maximizing it is a different problem from checking the certificate.

The reference allocation searches for a removal order with margins $\delta_g=0.5|g|B/n$, exactly for at most ten groups and with the fixed greedy procedure otherwise. The coalition guard then raises audit probabilities where needed to cover every coalition's benefit multiplied by 1.02. The guarded allocation defines the reference budget $A$. For the higher budget the target is $\min(4A,n)$, reached by raising probabilities equally until the budget or a cap binds. Coalition audits reoptimize $\lambda(a)$ at the same target with the deterministic flow solver and residual-budget fill of the main study, and uniform audits give each agent the same probability. The reference budget is part of the design and not a claim that this budget is optimal.

Tie-aware coalition audits first obtain the maximal multiplier $\lambda^\star$. They then require $\lambda(a)\geq\lambda^\star-10^{-8}$ and maximize $\delta$ subject to
\[
D_g-L_g(S_g)\geq\delta B|g|/n
\]
for every group along the reference removal order, where $S_g$ contains that group and all later groups. The reference order at the reference budget is used at both budgets, and no further adjustment follows this solve. The optimization uses the witness graph, the grouping and the ordered loads, and it never sees training outcomes.

Minimum-$L_2$ coalition audits keep the same multiplier tolerance and minimize $\sum_ia_i^2$. Their constraints depend on witness sets, benefits, costs, fines, budget and caps, but not on groups, removal orders, networks or outcomes. Feasibility can be written with nonnegative task-to-witness flows in which task $e$ supplies $\lambda b_e$ and agent $i$ has capacity $c_i+F_ia_i$. Strict convexity makes the minimizing audit vector unique, although supporting flows need not be. The implementation uses an analytically checked uniform solution where feasible and otherwise a quadratic program with an independent solver comparison. All intended jobs were trained, including allocations that numerically duplicate another arm.

At the benefit upper bound, reporting is globally optimal for every allocation in the study, and context benefits are at most that bound. This concerns unsafe-context rewards and not the legitimate contexts or the joint neural objective. The reference allocation meets the certificate on all 24 graphs at both budgets, coalition audits on none at the reference budget and on 20 at four times it, uniform audits on none and 12, and tie-aware audits on all 48 graph and budget pairs. None of the 24 minimum-$L_2$ allocations meets it, although all satisfy the coalition constraint within tolerance. These are numerical checks with the stated solver tolerances and not symbolic proofs of exact optima.

\subsection{Pretraining and recovery}

For each graph and training replicate, a fully shared network first trains without audits for 512 rollouts from silence probability 0.75, and its parameters are copied into each group network for recovery, so every allocation starts from the same pretrained parameters. All learned-start networks have unsafe and legitimate completion above 99\%. The reset condition keeps the hidden features, zeroes the output weights and sets the output bias so that each agent is silent with probability 0.95. Optimizer moments and the context value baseline reset in every recovery arm. With independent witnesses, initial task completion after the reset is $0.95^k$, or 81.45\% at width 4 and 66.34\% at width 8, in legitimate contexts as well, so the reset starts below the legitimate-completion threshold.

The network, PPO settings, Adam settings and value baseline are those of Appendix~\ref{app:protocol}, and the actor code is unchanged from the audited original implementation. Each rollout samples 64 one-step joint actions in each of the eight contexts. A recovery runs 4,096 rollouts, or 2,097,152 sampled episodes, and pretraining 262,144. Matched arms use the same namespaced random streams for corresponding action and audit draws, although different policies and audits produce different realized actions and rewards. Exact expectations are used only for evaluation.

\subsection{Endpoints and uncertainty}

The endpoint is the mean of the 16 exact evaluations at rollouts 3,136 to 4,096 in steps of 64, as in Appendix~\ref{app:inference}, and joint recovery uses the same thresholds. For the primary contrast, we subtract reference from coalition unsafe completion in each paired replicate, average the eight differences within a graph and give each of the eight family and width strata equal weight, which here equals the mean over the 24 graph averages. The 95\% percentile interval uses 10,000 graph-bootstrap resamples with NumPy PCG64 seed 20261007, each selecting three graphs with replacement within every stratum and keeping all paired replicates. Secondary contrasts use the same indices, and their intervals are descriptive and uncorrected for multiplicity. Only three graphs occur in each stratum, and the bootstrap is conditional on their observed replicate means. Success counts are descriptive, since a binomial model with 192 independent trials would be wrong. The analytic sensitivity interval for the primary effect, specified in the analysis plan, is $[36.19,48.25]$ points, and the unstratified graph-bootstrap interval is $[31.77,53.10]$. All four family means favor reference audits.

\subsection{All conditions and contrasts}

Table~\ref{tab:new-allocation-all} reports every allocation, budget and start cell, and Tables~\ref{tab:new-allocation-main-contrasts} to~\ref{tab:new-allocation-l2-contrasts} every declared contrast. The primary comparison is the only confirmatory contrast. The minimum-$L_2$ control ran only at the reference budget from learned silence.

\begin{table*}[t]
\centering
\caption{Every completed allocation condition. Completion rates are percentages. Joint recovery requires unsafe completion below 1\% and legitimate completion of at least 90\%. Each row uses the same 24 graphs and eight paired replicates. Displayed zeros are rounded probabilities. Minimum-$L_2$ was run only in the final row's condition.}
\label{tab:new-allocation-all}
\footnotesize
\begin{tabular}{@{}lclrrr@{}}
\toprule
Start & Budget & Allocation & Unsafe & Legitimate & Joint \\
\midrule
Learned & $1\times$ & Reference & 57.41 & 99.99 & 22/192 \\
Learned & $1\times$ & Coalition & 99.62 & 99.99 & 0/192 \\
Learned & $1\times$ & Uniform & 99.98 & 99.99 & 0/192 \\
Learned & $1\times$ & Tie-aware coalition & 58.02 & 99.99 & 26/192 \\
Learned & $4\times$ & Reference & 15.06 & 99.97 & 135/192 \\
Learned & $4\times$ & Coalition & 16.18 & 99.99 & 132/192 \\
Learned & $4\times$ & Uniform & 42.13 & 99.98 & 101/192 \\
Learned & $4\times$ & Tie-aware coalition & 15.19 & 99.97 & 123/192 \\
$p=0.95$ reset & $1\times$ & Reference & 33.23 & 99.99 & 49/192 \\
$p=0.95$ reset & $1\times$ & Coalition & 99.27 & 99.99 & 0/192 \\
$p=0.95$ reset & $1\times$ & Uniform & 99.98 & 99.99 & 0/192 \\
$p=0.95$ reset & $1\times$ & Tie-aware coalition & 34.32 & 99.99 & 42/192 \\
$p=0.95$ reset & $4\times$ & Reference & 0.00 & 50.00 & 96/192 \\
$p=0.95$ reset & $4\times$ & Coalition & 6.14 & 43.74 & 68/192 \\
$p=0.95$ reset & $4\times$ & Uniform & 34.21 & 51.02 & 27/192 \\
$p=0.95$ reset & $4\times$ & Tie-aware coalition & 0.00 & 50.00 & 96/192 \\
Learned & $1\times$ & Minimum-$L_2$ coalition & 99.98 & 99.99 & 0/192 \\
\bottomrule
\end{tabular}
\end{table*}

\begin{table*}[t]
\centering
\caption{All contrasts in the three-allocation confirmation. Only coalition minus reference at the reference budget from learned silence is primary (marked *). The other rows are secondary. Effects and 95\% stratified graph-bootstrap intervals are in percentage points. Positive values mean more unsafe completion under the first allocation.}
\label{tab:new-allocation-main-contrasts}
\footnotesize
\begin{tabular}{@{}lclrr@{}}
\toprule
Start & Budget & Contrast & Effect & Interval \\
\midrule
Learned & $1\times$ & Coalition minus Reference$^*$ & $42.22$ & $[38.23,46.13]$ \\
Learned & $1\times$ & Uniform minus Reference & $42.58$ & $[38.73,46.41]$ \\
Learned & $1\times$ & Coalition minus Uniform & $-0.36$ & $[-1.00,0.00]$ \\
Learned & $4\times$ & Coalition minus Reference & $1.12$ & $[-1.71,4.06]$ \\
Learned & $4\times$ & Uniform minus Reference & $27.07$ & $[24.01,30.15]$ \\
Learned & $4\times$ & Coalition minus Uniform & $-25.95$ & $[-29.99,-22.16]$ \\
$p=0.95$ reset & $1\times$ & Coalition minus Reference & $66.05$ & $[62.21,69.65]$ \\
$p=0.95$ reset & $1\times$ & Uniform minus Reference & $66.76$ & $[63.08,70.14]$ \\
$p=0.95$ reset & $1\times$ & Coalition minus Uniform & $-0.71$ & $[-2.13,0.01]$ \\
$p=0.95$ reset & $4\times$ & Coalition minus Reference & $6.14$ & $[2.13,12.31]$ \\
$p=0.95$ reset & $4\times$ & Uniform minus Reference & $34.21$ & $[31.36,37.08]$ \\
$p=0.95$ reset & $4\times$ & Coalition minus Uniform & $-28.06$ & $[-33.44,-21.64]$ \\
\bottomrule
\end{tabular}
\end{table*}

\begin{table*}[t]
\centering
\caption{All tie-aware extension contrasts. Every row is secondary and uses the original graphs and paired actors. Effects and 95\% stratified graph-bootstrap intervals are in percentage points. Positive values mean more unsafe completion under the first allocation.}
\label{tab:new-allocation-tie-contrasts}
\footnotesize
\begin{tabular}{@{}lclrr@{}}
\toprule
Start & Budget & Contrast & Effect & Interval \\
\midrule
Learned & $1\times$ & Coalition minus Tie-aware coalition & $41.61$ & $[37.69,45.54]$ \\
Learned & $1\times$ & Tie-aware coalition minus Reference & $0.61$ & $[-3.42,4.42]$ \\
Learned & $1\times$ & Uniform minus Tie-aware coalition & $41.96$ & $[38.21,45.77]$ \\
Learned & $4\times$ & Coalition minus Tie-aware coalition & $0.99$ & $[-1.30,3.56]$ \\
Learned & $4\times$ & Tie-aware coalition minus Reference & $0.13$ & $[-3.81,3.80]$ \\
Learned & $4\times$ & Uniform minus Tie-aware coalition & $26.94$ & $[23.70,30.49]$ \\
$p=0.95$ reset & $1\times$ & Coalition minus Tie-aware coalition & $64.95$ & $[62.56,67.25]$ \\
$p=0.95$ reset & $1\times$ & Tie-aware coalition minus Reference & $1.10$ & $[-1.87,4.12]$ \\
$p=0.95$ reset & $1\times$ & Uniform minus Tie-aware coalition & $65.66$ & $[63.23,68.02]$ \\
$p=0.95$ reset & $4\times$ & Coalition minus Tie-aware coalition & $6.14$ & $[2.13,12.31]$ \\
$p=0.95$ reset & $4\times$ & Tie-aware coalition minus Reference & $0.00$ & $[0.00,0.00]$ \\
$p=0.95$ reset & $4\times$ & Uniform minus Tie-aware coalition & $34.20$ & $[31.36,37.08]$ \\
\bottomrule
\end{tabular}
\end{table*}

\begin{table*}[t]
\centering
\caption{All minimum-$L_2$ control contrasts, at the reference budget from learned silence. Every row is secondary. Effects and 95\% stratified graph-bootstrap intervals are in percentage points. Positive values mean more unsafe completion under the first allocation.}
\label{tab:new-allocation-l2-contrasts}
\footnotesize
\begin{tabular}{@{}lclrr@{}}
\toprule
Start & Budget & Contrast & Effect & Interval \\
\midrule
Learned & $1\times$ & Minimum-$L_2$ coalition minus Reference & $42.57$ & $[38.73,46.41]$ \\
Learned & $1\times$ & Minimum-$L_2$ coalition minus Coalition & $0.36$ & $[-0.01,0.99]$ \\
Learned & $1\times$ & Minimum-$L_2$ coalition minus Uniform & $0.00$ & $[-0.01,0.00]$ \\
Learned & $1\times$ & Minimum-$L_2$ coalition minus Tie-aware coalition & $41.96$ & $[38.20,45.76]$ \\
\bottomrule
\end{tabular}
\end{table*}

At four times the budget after the reset, reference and tie-aware audits each have 96 width-4 runs with legitimate completion near zero and 96 width-8 runs that jointly recover, so their pooled legitimate completion of 50.00\% is not a moderate loss shared across runs. Audit fines are off in legitimate contexts. Base costs, shared hidden features and the optimizer could all contribute, and the experiment does not isolate the cause. At the reference budget from learned silence, every joint recovery under reference or tie-aware audits is at width 4, and most certified runs fail the joint threshold within the horizon.

Tie-aware minus reference at the reference budget from learned silence is 0.61 points, with interval $[-3.42,4.42]$, which does not establish equivalence. A check on the inputs finds group deterrence totals that match the reference allocation within $10^{-7}$ on 22 of 24 graphs at the reference budget and on 10 at the larger budget. Equal totals give identical grouped expected objectives, although the distribution of sampled fines can still differ. Selecting among static optima changes several features of learning at once, so the comparisons do not identify certificate status as the only cause of the improvement.

\subsection{A fixed worked graph}

The retained analysis plan designates the worked case before any outcome was inspected. It is the uniform graph of width four and seed 71027011 at the reference budget from learned silence, with nine groups. All five allocations spend 4.0 per unsafe context and buy total deterrence 193.63725688. The maximal coalition multiplier is 1.5. Reference audits reach about 1.44809, while tie-aware audits keep the optimum within tolerance and meet the certificate. Mean unsafe completion is 29.67\% under reference audits and 31.28\% under tie-aware audits, each with one joint recovery in eight runs. The other three allocations leave about 99.97\% to 99.98\% unsafe completion and no joint recovery. The released case includes all audit vectors, task witnesses, group loads and all 40 endpoints. It belongs to the same 24-graph sample and adds no independent observation.

\subsection{Relation to the exact-update theorem}

The recovery theorem assumes exact gradients, separate unbounded group logits, fixed positive coordinate scalings, a constant finite step and no regularization. Neural PPO uses sampled advantages, bounded logits, entropy regularization, adaptive optimization and shared features across contexts, and reducing unsafe completion while preserving legitimate completion is a different criterion from approaching universal reporting in one unsafe context. The study measures whether an allocation chosen with the theorem helps this finite learner. It neither establishes a convergence theorem for PPO nor shows that certified allocations always do better than uncertified ones. The auditor's intervention is limited to the allocation while the team continues the fixed learning procedure. The simulator would also allow replacing the policy, which could hard-code reporting in these known unsafe contexts. The model assumes that reports stop tasks and that the auditor can verify wrongful silence even when another witness reports, and it measures no recognition, evidence collection or audit costs beyond the audits themselves.

\section{Verification and retained evidence}
\label{app:verification}

The returned main archive contains all 4,080 scheduled records, the source snapshot, the input bank, pretrained policies and execution metadata. Its source and schedule match the released study package. Record hashes, graph and prototype identities, endpoint calculations, audit budgets and continuation prefixes were checked. Evaluating the saved final policies separately gives a largest absolute difference of $5.12\times10^{-13}$ from the stored values, and all discrete certificate results agree.

The original package verifier compared full certificate dictionaries with exact floating-point equality, which can reject a valid run on another platform because of rounding. Our audit compares discrete fields exactly and real-valued fields within stated tolerances. The largest certificate discrepancy is $3.41\times10^{-13}$. The change affects only the rule for numeric comparisons, and no learned policy or result record changes.

The 288-record allocation control reproduces every original-arm joint recovery classification, and its largest cross-platform difference in mean unsafe completion is 0.0162 percentage points. The 96-record common-draw control also reproduces every original-arm classification, with a largest unsafe-fraction difference of $9.97\times10^{-11}$, and separately recomputed final values differ by at most $2.84\times10^{-14}$.

The main run took 155.54 minutes on an Apple M2 Pro using Python 3.12.12 and NumPy 2.5.0. The allocation and common-draw controls took 10.44 and 2.91 minutes on a Linux host using Python 3.12 and NumPy 2.3.5, with six workers for the common-draw control. These wall times describe only the stated runs. No GPU or model API was used for simulation training.

The supplement keeps every per-run outcome with its graph, replicate, experiment and method identifiers, which support all tables and aggregate figures without treating repeated measurements as new units. The main archive and the two control archives stay separate in the analysis, so the later controls cannot replace the prespecified estimate. The input bank itself is not included, but \texttt{reproducibility/\allowbreak study/\allowbreak tools/\allowbreak rebuild\_inputs.py} rebuilds it from the protocol. A rebuild on a different machine reproduced the schedule file byte for byte and the recorded range of general-graph $B-D$ to within $10^{-12}$. From the rebuilt bank, \texttt{analysis/\allowbreak design\_checks.py} recomputes the guard shares, single-agent certificates, elimination rounds and thresholds reported in the text. For the five budget-sweep arms that form a grouping, it also checks the certificate of every run on the general graphs in each of the run's four unsafe contexts, and it writes Table~\ref{tab:rounds} and Table~\ref{tab:splits}(a). The script \texttt{analysis/\allowbreak reproduce.py} writes every other table and figure of the grouping experiments from the included data. The four tables of Appendix~\ref{app:newalloc} are transcribed from the CSV files under \texttt{allocation\_study/analysis}, and \texttt{allocation\_study/\allowbreak verify\_evidence.py} recomputes every number in them from the endpoint rows.

A further check reconstructed all 4,080 original records and both control archives with an actor forward pass and reward calculation that do not import the study evaluator or network. Its largest saved-final-value discrepancy on the original records is $1.14\times10^{-13}$, and every joint recovery classification is unchanged. Exact rational action enumeration separately checks the expected-reward polynomial, its derivatives and the Fisher factors on finite instances. Order costs, capped purchase costs, maximum closure and the finite-time bound are checked against separate finite calculations, and directional differences check the implemented PPO and network gradients. The \texttt{revision\_audit} folder contains these scripts and their outputs, and the raw-record reconstruction needs the full result archives, which are not part of the supplement. These checks add no training observations and do not constitute a general computer-verified proof.

The checks above were recorded when the main study and its two controls were run. For the later allocation study of Appendix~\ref{app:newalloc}, the \texttt{allocation\_study} folder of the supplement includes the frozen source and inputs, every endpoint of its 3,456 records, the allocation vectors, the analysis scripts and \texttt{verify\_evidence.py}, an independently written check that recomputes all condition means, contrasts and intervals from the endpoint rows. We reran that check on a fresh extraction of the released archive. It does not recompute endpoints from saved networks, which are not included, and it does not rerun training.

\end{document}